\def\eqref#1{equation~\ref{#1}}
\def\1{\bm{1}}
\DeclareMathAlphabet{\mathsfit}{\encodingdefault}{\sfdefault}{m}{sl}
\SetMathAlphabet{\mathsfit}{bold}{\encodingdefault}{\sfdefault}{bx}{n}
\newcommand\sgn{\sigma_{\mathtt{sgn}}}
\newcommand\HAC{\mathtt{HAC}(d,r,k)}
\newcommand\bW{\mathbf{W}}
\newcommand\bV{\mathbf{V}}
\newcommand\bw{{w}}
\newcommand\bv{{v}}
\newcommand\ba{{a}}
\newcommand\biasb{{b}}
\newcommand\biasc{{c}}
\newcommand\bs{{s}}
\newcommand\Bool[1]{\mathtt{Bool}_{#1}}
\definecolor{codegreen}{rgb}{0,0.6,0}
\definecolor{codegray}{rgb}{0.5,0.5,0.5}
\definecolor{codepurple}{rgb}{0.58,0,0.82}
\definecolor{backcolour}{rgb}{0.95,0.95,0.92}
\lstdefinestyle{mystyle}{
    backgroundcolor=\color{backcolour},   
    commentstyle=\color{codegreen},
    keywordstyle=\color{magenta},
    numberstyle=\tiny\color{codegray},
    stringstyle=\color{codepurple},
    basicstyle=\ttfamily\footnotesize,
    breakatwhitespace=false,         
    breaklines=true,                 
    captionpos=b,                    
    keepspaces=true,                 
    numbersep=5pt,                  
    showspaces=false,                
    showstringspaces=false,
    showtabs=false,                  
    tabsize=2
}
\newcommand{\blap}[1]{\smash[b]{\begin{tabular}[t]{@{}c@{}}#1\end{tabular}}}
\newtheorem{theorem}{Theorem}[section]
\crefname{theorem}{Theorem}{Theorems}
\crefname{lemma}{Lemma}{Lemmas}
\crefname{corollary}{Corollary}{Corollaries}
\newtheorem{proposition}[theorem]{Proposition}
\crefname{proposition}{Proposition}{Propositions}
\crefname{conjecture}{Conjecture}{Conjecture}
\theoremstyle{definition}
\newtheorem{definition}[theorem]{Definition}
\crefname{definition}{Definition}{Definitions}
\crefname{figure}{Figure}{Figures}
\crefname{SCfigure}{Figure}{Figures}
\crefname{section}{Section}{Sections}
\crefname{subsection}{Section}{Sections}
\theoremstyle{remark}
\newtheorem{remark}[theorem]{Remark}
\definecolor{cbblue}{HTML}{88BEEC}
\definecolor{cbyellow}{HTML}{FFE9A5}
\title{VC dimension of partially quantized neural networks in the overparametrized regime}
\author{Yutong Wang\textsuperscript{1} \& Clayton Scott\textsuperscript{1 2}
\\
\textsuperscript{1}Department of Electrical Engineering and Computer Science\\
\textsuperscript{2}Department of Statistics\\
University of Michigan\\
Ann Arbor, MI 48109, USA \\
\texttt{\{yutongw,clayscot\}@umich.edu} \\
}
\begin{document}

\maketitle

\begin{abstract}
Vapnik-Chervonenkis (VC) theory has so far been unable to explain the small generalization error of overparametrized neural networks.
Indeed, existing applications of VC theory to large networks obtain upper bounds on VC dimension that are proportional to the number of weights, and for a large class of networks, these upper bound are known to be tight.
  In this work, we focus on a class of partially quantized networks that we refer to as \emph{hyperplane arrangement neural networks} (HANNs).
Using a sample compression analysis, we show that HANNs can have VC dimension significantly smaller than the number of weights, while being highly expressive.
In particular, empirical risk minimization over HANNs in the overparametrized regime achieves the minimax rate for classification with Lipschitz posterior class probability.
We further demonstrate the expressivity of HANNs empirically.
On a panel of 121 UCI datasets, overparametrized HANNs
match the performance of state-of-the-art full-precision models.
\end{abstract}

\section{Introduction}

Neural networks have become an indispensable tool for machine learning practitioners, owing to their impressive performance especially in vision and natural language processing \citep{goodfellow2016deep}.
In practice, neural networks are often applied in the \emph{overparametrized} regime and are capable of fitting even random labels
\citep{zhang2021understanding}.
Evidently, these overparametrized models perform well on real world data despite their ability to grossly overfit, a phenomenon that has been dubbed ``the generalization puzzle'' \citep{nagarajan2019uniform}.

Toward solving this puzzle, several research directions have flourished and offer 
potential
explanations, including
implicit regularization 
\citep{chizat2020implicit},
interpolation
\citep{chatterji2021finite},
and
benign overfitting
\citep{bartlett2020benign}.
So far, VC theory has not been able to explain the puzzle,
because 
existing bounds on the VC dimensions of neural networks are
on the order of the number of weights \citep{maass1994neural,bartlett2019nearly}.
It remains unknown whether there exist neural network architectures capable of modeling rich set of classfiers with low VC dimension.

The focus of this work is on a class of neural networks with threshold activation that we refer to as \emph{hyperplane arrangement neural networks} (HANNs).
Using the theory of sample compression schemes \citep{littlestone1986relating}, we show that HANNs can have VC dimension that is significantly smaller than the number of parameters.
Furthermore, we apply this result to show that HANNs have high expressivity by proving that HANN classifiers achieve minimax-optimality when the data has Lipschitz posterior class probability in an overparametrized setting.

We benchmark the empirical performance of HANNs on a panel of 121 UCI datasets, following several recent neural network and neural tangent kernel works \citep{klambauer2017self,wu2018improved,arora2019harnessing,shankar2020neural}.
In particular, \cite{klambauer2017self} showed that, using a properly chosen activation, overparametrized neural networks perform competitively compared to classical shallow methods on this panel of datasets.
Our experiments show that HANNs, a partially-quantized model, match the classification accuracy of the self-normalizing neural network \citep{klambauer2017self} and the dendritic neural network \citep{wu2018improved}, both of which are full-precision models.

\subsection{Related work}

\textbf{VC dimensions of neural networks.}
The 
\emph{VC-dimension}
\cite{vapnik1971uniform}
is a combinatorial measure of the complexity of a concept class, i.e., a set of classifiers.
The \emph{Fundamental Theorem of Statistical Learning} \cite[Theorem 6.8]{shalev2014understanding} states that 
a concept class has finite VC-dimension if and only if it is probably approximately correct (PAC) learnable, where
 the VC-dimension is tightly related to the number of samples required for PAC learning.

For threshold networks, \cite{cover1965geometrical,baum1989size} showed a VC-dimension upper bounded of $O(w \log w)$, where $w$ is the number of parameters.
\cite{maass1994neural} obtained a matching \emph{lower} bound attained by a network architecture with two hidden layers.
More recently, \cite{bartlett2019nearly} obtained the upper and lower bounds $O(w \ell  \log w)$ and $\Omega(w \ell \log (w/\ell))$ respectively for the case when the activation is piecewise linear, where $\ell$ is the number of layers.
These lower bounds are achieved by somewhat unconventional network architectures.
The architectures we consider exclude these and thus we are able to achieve a smaller upper bound on the VC dimensions.

\textbf{The generalization puzzle.}
In practice, neural networks that achieve state-of-the-art performance use significantly more parameters than samples, a phenomenon that cannot be explained by classical VC theory if the VC dimension $\ge$ number of weights. This has been dubbed the \emph{generalization puzzle} \citep{zhang2021understanding}.
To explain the puzzle, researchers have pursued new directions including
margin-based bounds 
\citep{neyshabur2017exploring,bartlett2017spectrally},
PAC-Bayes bounds
\citep{dziugaite2017computing},
and
implicit bias of optimization methods
\citep{gunasekar2018implicit,chizat2020implicit}.
We refer the reader to the recent article by \cite{bartlett2021deep} for a comprehensive coverage of this growing literature.

The generalization puzzle is \emph{not} specific to deep learning.
For instance, AdaBoost has been observed to continue to decrease the test error while the VC dimension grows linearly with the number of boosting rounds \citep{schapire2013explaining}.
Other learning algorithms that exhibits similarly surprising behavior include random forests \citep{wyner2017explaining} 
and kernel methods \citep{belkin2018understand}.

\textbf{Minimax-optimality.}
Whereas VC theory is distribution-independent,
minimax theory is concerned with the question of \emph{optimal} estimation/classification under distributional assumptions\footnote{Without distributional assumptions, no classifier can be minimax optimal in light of the No-Free-Lunch Theorem \citep{devroye1982any}.}.
A minimax optimality result shows that the expected excess classification error goes to zero at the fastest rate possible, as the sample size tend to infinity.
For neural networks, this often involves a hyperparameter selection scheme in terms of the sample size.

\cite{farago1993strong} show minimax-optimality of (underparametrized) neural networks for learning to classify under certain assumptions on the Fourier transform of the data distribution.
\cite{schmidt2020nonparametric} shows minimax-optimality of $s$-\emph{sparse} neural networks for regression over H\"older classes, where at most $s = O(n \log n)$ network weights are nonzero, and $n =$ the number of training samples.
\cite{kim2021fast}
extends the results of \cite{schmidt2020nonparametric} to the classification setting, remarking that effective optimization under sparsity constraint is lacking.
\cite{kohler2020rate} and \cite{langer2021analysis} proved minimax-optimality without the sparsity assumption, however in an underparametrized setting.
To the best of our knowledge, our result is the first to establish minimax optimality of overparametrized neural networks without a sparsity assumption.

\textbf{(Partially) quantized neural networks.}
Quantizing some of the weights and/or activations of neural networks has the potential to reduce the high computational burden of neural networks at test time \citep{qin2020binary}.
Many works have focused on the efficient training of quantized neural networks to close the performance gap with full-precision architectures \citep{hubara2017quantized,rastegari2016xnor,lin2017towards}.
Several works have observed that quantization of the activations, rather than of the weights, leads to a larger accuracy gap
\citep{cai2017deep,mishra2018wrpn,kim2019binaryduo}.

Towards explaining this phenomenon, researchers have focused on understanding the so-called \emph{coarse gradient}, a term coined by \cite{yin2019understanding}, often used in training QNNs as a surrogate for the usual gradient.
One commonly used heuristic is the \emph{straight-through-estimator} (STE) first introduced in an online course by \cite{hinton2012neural}.
Theory supporting the STE heuristic has recently been studied in
\cite{li2017training}
 and
\cite{yin2019understanding}.

QNNs have also been analyzed from other theoretical angles, including mean-field theory
\citep{blumenfeld2019mean},
memory capacity \citep{vershynin2020memory},
Boolean function representation capacity  \citep{baldi2019capacity}
and adversarial robustness 
\citep{lin2018defensive}.
Of particular relevance to our work, \cite{maass1994neural} constructed an example of a QNN architecture with VC dimension on the order of the number of weights in the network. In contrast, our work shows that there exist QNN architectures with much smaller VC dimensions.

\textbf{Sample compression schemes.}
Many concept classes with geometrically structured decision regions, such as axis-parallel rectangles, can be trained on a properly chosen size $\sigma$ subset of an arbitrarily large training dataset without affecting the result.
Such a concept class is said to admit a \emph{sample compression schemes} of size $\sigma$, a notion introduced by \cite{littlestone1986relating} who showed that the VC dimension of the class is upper bounded by $O(\sigma)$.
Furthermore, the authors posed the \emph{Sample Compression Conjecture}.
See \cite{moran2016sample} for the best known partial result and an extensive review of research in this area.
Besides the conjecture, sample compression schemes have also been applied to other long-standing problems in learning theory
\citep{hanneke2019sample,bousquet2020proper,ashtiani2020near}.
To the best of our knowledge, our work is the first to apply sample compression schemes to neural networks.

\section{Notations}

The set of real numbers is denoted $\mathbb{R}$.
The unit interval is denoted $[0,1]$.
For an integer $k \ge 1$, let $[k] = \{1,\dots, k\}$.
We use $\mathcal{X}$ to denote the feature space, which in this work will either be $\mathbb{R}^d$ or $[0,1]^d$ where $d \ge 1$ is the ambient dimension/number of features.

Denote by
$\mathbb{I}\{\mathtt{input}\}$ the \emph{indicator} function which returns $1$ if $\mathtt{input}$ is true and $0$ otherwise.
The \emph{sign} function is given by $\sgn(t) = \mathbb{I}\{t \ge 0\}  - \mathbb{I}\{t < 0\}$.
For vector inputs, $\sgn$ applies entry-wise.

The set of labels for binary classification is denoted $\mathbb{B} := \{\pm 1\}$.
Joint distributions on $\mathcal{X} \times \mathbb{B}$ are denoted by $P$, where 
$X,Y \sim P$ denotes a random instance-label pair distributed according to $P$.
Let $f: \mathcal{X} \to \mathbb{B}$ be a binary classifier. The \emph{risk} with respect to $P$ is denoted by $R_P(f) := P( f(X) \ne Y)$.
For an integer $n \ge 1$, the \emph{empirical risk} is the random variable $\hat R_{P,n}(f) := \frac{1}{n} \sum_{i=1}^n \mathbb{I}\{ f(X_i) \ne Y_i\}$, where $(X_1, Y_1),\dots, (X_n,Y_n) \sim P$ are i.i.d.
The \emph{Bayes risk} $\inf_{f: \mathcal{X} \to \mathbb{B}} R_P(f)$ with respect to $P$ is denoted by $R_P^*$.

Let $f,g : \{1,2,\dots\} \to \mathbb{R}_{\ge 0}$ be nonnegative functions on the natural numbers. We write $f \asymp g$ if there exists $\alpha,\beta >0$ such that 
for all $n = 1,2,\dots$ we have $\alpha g(n) \le f(n) \le \beta g(n)$.

\section{Hyperplane arrangement neural networks}
\label{gen_inst}

A hyperplane $H$ in $\mathbb{R}^d$ is specified by its normal vector $\bw \in \mathbb{R}^d$ and bias $b \in \mathbb{R}$.
The mapping $x \mapsto \sgn(\bw^\top x +b)$ indicates the side of $H$ that $x$ lies on, and hence induces a partition of $\mathbb{R}^d$ into two halfspaces.
A set of $k \ge 1$ hyperplanes is referred to as a $k$-\emph{hyperplane arrangement}, and specified by a matrix of normal vectors 
  and a vector of offsets:
\[
\bW = 
[
  \bw_1 \cdots \bw_k
  ] \in \mathbb{R}^{d \times k}
  \quad \mbox{and}\quad
\biasb = [b_1,\dots, b_k]^\top.
\]
Let
  $
    q_{\bW, \biasb}(x) := \sgn(\bW^\top x + \biasb)$ for all $x \in \mathbb{R}^d.
  $
  The vector $q_{\bW,\biasb}(x) \in \mathbb{B}^k$ is called a \emph{sign vector} and the set of all realizable sign vectors
  is denoted
  $\mathfrak{S}_{\bW,\biasb} := \{q_{\bW,\biasb}(x) : x \in \mathbb{R}^d\}.$
  Each sign vector $\bs \in \mathfrak{S}_{\bW,\biasb}$ uniquely defines a set $\{x \in \mathbb{R}^d: q_{\bW, \biasb}(x) = \bs\}$ known as a \emph{cell} of the hyperplane arrangement.
  The set of all cells forms a partition of $\mathbb{R}^d$.
For an example, see \cref{fig: hyperplane arrangement}-left.

A classical result in the theory of hyperplane arrangement due to \cite{buck1943partition} gives the following tight upper bound on the number of distinct sign patterns/cells:
\begin{equation}
  \label{equation: upper bound on the number of cells}
  |\mathfrak{S}_{\bW,\biasb}| \le \binom{k}{\le d}
  := 
  \begin{cases}
    2^k &: k < d,\\
    \binom{k}{0} + \binom{k}{1} + \cdots + \binom{k}{d} &: k \ge d.
  \end{cases}
\end{equation}
See \cite{fukuda2013lecture} Theorem 10.1 for a simple proof.
A \emph{hyperplane arrangement classifier} assigns a binary label $y \in \mathbb{B}$ to a point $x \in \mathbb{R}^d$ solely based on the sign vector $q_{\bW,\biasb}(x)$.
\begin{figure}[htpb]
  \centering
\begin{tikzpicture}[scale=0.9,>=stealth]
        \fill[gray!15] (1, 2) -- (0.2,1.2)-- (10/18,-5/18) --(2,-1) -- (2,2) -- cycle;
 \draw[ultra thick,yshift=0.5] (-2,-1) -- (1,2) ;
 \draw[ultra thick, lightgray, yshift=-1.5] (-2,-1) -- (1,2) ;
 \draw[ultra thick,xshift=0.5]  (0,2) -- (1,-2) ;
 \draw[ultra thick, lightgray, xshift=-1]  (0,2) -- (1,-2) ;
 \draw[ultra thick] (-2,1)--(2,-1);
 \draw[ultra thick, lightgray, yshift=1.5] (-2,1)--(2,-1);
 \node[above right] at (-2,1) {$H_1$};
 \node[below right] at (-2,-1) {$H_2$};
 \node[above left] at (1,-2) {$H_3$};
 \node at (-0.05,0.45) {\texttt{+++}};
 \node at (-0.5,-0.5) {\texttt{-++}};
 \node at (-1.5,0) {\texttt{--+}};
 \node at (-0.8,1.5) {\texttt{+-+}};
 \node (A) at (1.3,0.5) {{\texttt{++-}}};
 \draw [->,thick] (A) to [bend left=13] (2.8,.85);
 \node at (0.44,1.86) {\texttt{+--}};
 \node at (1.3,-1.3) {\texttt{-+-}};
        \draw[thick] (-2,-2)--(2,-2)--(2,2)--(-2,2)--cycle;
        \node[circle,fill=black,scale=0.6] at  (-2/3,1/3) {};
        \node[circle,fill=black,scale=0.6,xshift=1.5,yshift=-0.5] at  (10/18,-5/18) {};
        \node[circle,fill=black,scale=0.6,xshift=0.7,yshift=1.0] at (0.2,1.2) {};
\end{tikzpicture}
\begin{tikzpicture}
  \fill[gray!20] (-1,0.9) rectangle (0.15,0.5);
  \draw[thick] (-1,0.9) rectangle (0.15,0.5);
  \node  at (0,0) {
      $\begin{array}{l|l}
        \mathtt{B_1B_2B_3} & \,\,\mathtt{Y}\\
        \hline
        \mathtt{+++} & \colorbox{cbblue}{$\mathtt{+}$}\\
        {\mathtt{++-}} & \colorbox{cbblue}{$\mathtt{+}$}\\
        \mathtt{+-+} & \colorbox{cbyellow}{$\mathtt{-}$}\\
        \mathtt{+--} & \colorbox{cbblue}{$\mathtt{+}$}\\
        \mathtt{-++} & \colorbox{cbyellow}{$\mathtt{-}$}\\
        \mathtt{-+-} & \colorbox{cbblue}{$\mathtt{+}$}\\
        \mathtt{--+} & \colorbox{cbyellow}{$\mathtt{-}$}
      \end{array}$
    };
  \draw[thick] (0.44,0.92) rectangle (0.91,0.46);
\end{tikzpicture}
\begin{tikzpicture}[scale=0.9]
  \fill[cbyellow] (-2,-2)--(2,-2)--(2,2)--(-2,2)--cycle;
        \fill[cbblue] (0, 2) -- (0.2,1.2)-- (-2/3,1/3)-- (10/18,-5/18) --(1,-2)--(2,-2) -- (2,2) -- cycle;
 \draw[dotted,thick] (-2,-1) -- (1,2) ;
 \draw[dotted, thick]  (0,2) -- (1,-2) ;
 \draw[dotted,thick] (-2,1)--(2,-1);
 \node at (-0.05,0.45) {\texttt{+}};
 \node at (-0.5,-0.5) {\texttt{-}};
 \node at (-1.5,0) {\texttt{-}};
 \node at (-0.8,1.5) {\texttt{-}};
 \node (A) at (1.3,0.5) {{\texttt{+}}};
 \node at (0.35,1.7) {\texttt{+}};
 \node at (1.3,-1.3) {\texttt{+}};
 \draw [<-,thick] (A) to [bend right=13] (-2.8,.85);
 \draw[thick] (-2,-2)--(2,-2)--(2,2)--(-2,2)--cycle;
\end{tikzpicture}
\caption{Left: An arrangement of $3$ hyperplanes $\{H_1,H_2,H_3\}$ in $\mathbb{R}^2$. There are $7$ sign patterns. Middle: An example of a lookup table (see Remark~\ref{remark: lookup table}). Right: the resulting classifier.}%
  \label{fig: hyperplane arrangement}
\end{figure}
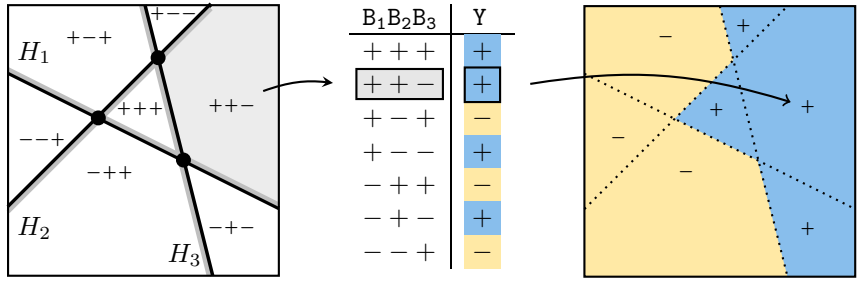

\begin{definition}
  \label{definition: HAC}
  Let $\mathbb{B}^{\mathcal{X}}$ be the set of all functions from $\mathcal{X}$ to $\mathbb{B}$.
  A \emph{concept class} $\mathcal{C}$ over $\mathcal{X}$ is a subset of $\mathbb{B}^{\mathcal{X}}$.
  Fix $r,k$ positive integers, $r \le \min \{d,k\}$.
  Let $\Bool{k}$ be the set of all Boolean functions $\mathbb{B}^k \to \mathbb{B}$.
  The \emph{hyperplane arrangement classifier} class is the concept class, denoted  $\HAC$, over $\mathbb{R}^d$ defined by
  \begin{align*}
    \HAC
    =
    \{ h \circ q_{\bW,\biasb} :\, \, & h \in \Bool{k},  \, q_{\bW,\biasb}(x) := \sgn(\bW^\top x + \biasb),\\
                                         &\bW \in \mathbb{R}^{d \times k},\,\mathrm{rank}(\bW) \le r,\, \biasb \in \mathbb{R}^k \}.
  \end{align*}
\end{definition}

See \cref{fig: HAC} for a graphical representation of $\HAC$.
When the set of Boolean functions is realized by a neural network, we refer to the resulting  classifier as a \emph{hyperplane arrangement neural network} (HANN).

\begin{remark}
  \label{remark: lookup table}
  Consider a fixed hyperplane arrangement $\bW$, $\biasb$ and Boolean function $h \in \Bool{k}$.
  When performing prediction with the classifer $h \circ q_{\bW, \biasb}$,
  the feature vector $x$ is mapped to a sign vector to which $h$ is applied.
  Thus, we do not need to know how $h$ behaves outside of $\mathfrak{S}_{\bW,\biasb}$.
  The restriction of $h$ to $\mathfrak{S}_{\bW,\biasb}$ is a \emph{partially defined Boolean function} or a \emph{lookup table}.
\end{remark}

\begin{figure}[htpb]
  \centering

\tikzset{%
  every neuron/.style={
    circle,
    draw,
    scale=.7,
    inner sep = 0,
    minimum size=0.6cm
  },
  neuron missing/.style={
    draw=none, 
    scale=4,
    text height=0.333cm,
    execute at begin node=\color{black}$\vdots$
  },
}

\begin{tikzpicture}[x=1.5cm, y=1.5cm, >=stealth, scale=0.9, every node/.style={transform shape}]


\foreach \m/\l [count=\y] in {1,2,3,4}
\node [every neuron/.try, neuron \m/.try,scale=1.1] (input-\m) at (1,0.5*\y+.25) {$\mathtt{X}_\y$};

\foreach \m [count=\y] in {1,2}
  \node [every neuron/.try, neuron \m/.try,scale=1.1 ] (latent-\m) at (2,\y) {\includegraphics[width=0.33cm]{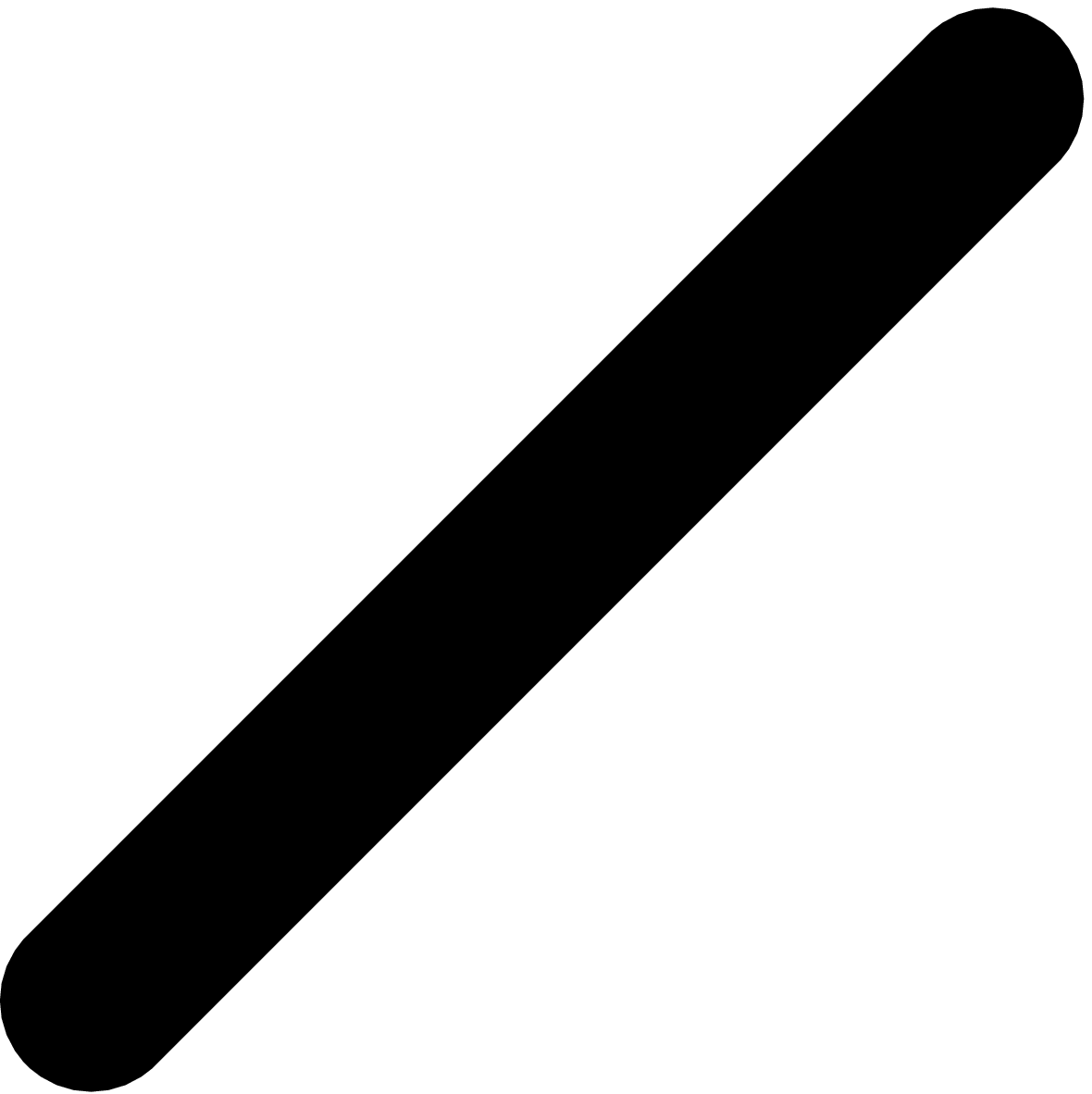}};

\foreach \m [count=\y] in {1,2,3}
  \node [every neuron/.try, neuron \m/.try,scale=1.1 ] (boolean-\m) at (3,0.75*\y) {\includegraphics[width=0.39cm]{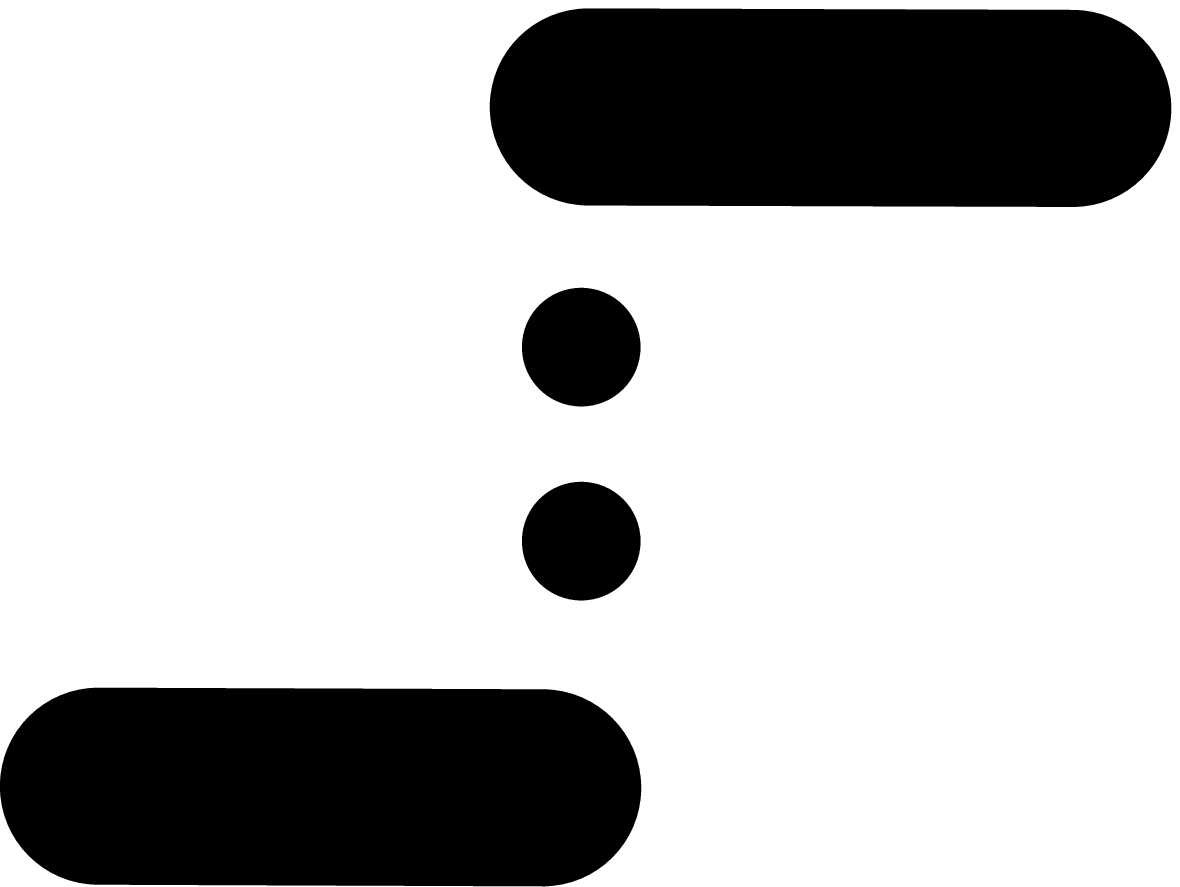}};



\foreach \l [count=\i] in {1,2,3} 
  \draw [->] (boolean-\i) -- ++(1,0)
    node [above, midway] {$\mathtt{B_\l}$};

\foreach \i in {1,...,4}
  \foreach \j in {1,...,2}
    \draw [->] (input-\i) -- (latent-\j);

\foreach \i in {1,...,2}
  \foreach \j in {1,...,3}
    \draw [->] (latent-\i) -- (boolean-\j);

  \node [align=center, above=0.9ex] at (input-4.north) {Input $\mathbb{R}^d$};
  \node [align=center, below] at (latent-1.south) {Latent $\mathbb{R}^r$};
  \node [align=center, above=1.2ex] at (boolean-3.north) {Boolean $\mathbb{B}^k$};
\draw[thick,draw=black] (4,0.5) rectangle ++(1.4,2);
  \node [align=center, above] at (4.7,2.5) {$h: \mathbb{B}^k \to \mathbb{B}$};
  \node [align=center] at (4.75,1.5) {
      $\begin{array}{l|l}
        \mathtt{B_1B_2B_3} & \mathtt{Y}\\
        \hline
        \mathtt{---} & \mathtt{-}\\
        \mathtt{--+} & \mathtt{+}\\
        \mathtt{-+-} & \mathtt{+}\\
        \vdots & \vdots\\
        \mathtt{+++} & \mathtt{-}
      \end{array}$
    };
  \draw [draw=gray] (5.4,2.5) -- (7,1.5);
  \draw [draw=gray] (5.4,0.5) -- (6,0.5);
  \draw [draw=gray] (5.4,1.8) -- (6,1.5);
  \fill[white] (5.45,2.3) rectangle ++(1,-0.65);
  \draw [->] (5.4,2) -- (6.5,2)
    node [below, midway] {Output}
    node [above, midway] {$\mathtt{Y}$};
\draw[thick,draw=lightgray] (6,1.5) rectangle ++(1,-1);
\foreach \m/\l [count=\y] in {1,2,3}
\node [every neuron/.try, neuron \m/.try,scale=0.4] (input0-\m) at (6.25,0.25*\y+.5) {};

\foreach \m/\l [count=\y] in {1,2,3,4}
\node [every neuron/.try, neuron \m/.try,scale=0.4] (hidden0-\m) at (6.5,0.2*\y+.5) {};

\node [every neuron/.try, neuron /.try,scale=0.4] (output0) at (6.75,1) {};

\node at (6.8,0.7) {$h_\theta$};
\foreach \l [count=\i] in {1,2,3} 
  \draw [-] (input0-\i) -- ++(-.18,0);

\foreach \i in {1,2,3}
  \foreach \j in {1,...,4}
    \draw [-] (input0-\i) -- (hidden0-\j);

\foreach \i in {1,...,4}
    \draw [-] (hidden0-\i) -- (output0);
\draw [-] (output0) -- ++(.18,0);
\draw[rounded corners,thick,draw=lightgray] (6.8+\x1,2.95+\y1) rectangle ++(1.15,-0.95); 
\node [right,scale = 0.75]at (6.9+\x1,2.8+\y1) {Activations};
  \node [right, scale=0.75] at (7.1+\x1,2.5+\y1) {Linear};
  \node [right, scale=0.75] at (7.1+\x1,2.2+\y1) {Threshold};
  \node [every neuron/.try, neuron 0/.try, scale=0.8 ] (Legend1) at (7+\x1,2.5+\y1) {\includegraphics[width=0.33cm]{figures/linear.eps}};

  \node [every neuron/.try, neuron 0/.try, scale=0.8 ] (Legend2) at (7+\x1,2.2+\y1) {\includegraphics[width=0.39cm]{figures/threshold.eps}};
\end{tikzpicture}
  \caption{The $\HAC$ concept class as a neural network where $d=4$, $r=2$ and $k=3$. The Boolean function $h$ is realized as a neural network $h_\theta$.}%
  \label{fig: HAC}
\end{figure}

\begin{remark}
  \label{remark: latent subspace}
  The hidden layer of width $r$ in \cref{fig: HAC} allows the user to impose the restriction that the hyperplane arrangement classifier depends only on $r$ relevant features, which can be either learned or defined by data preprocessing.
  When $r= d$, no restriction is imposed.
  In this case, the input layer is directly connected to the Boolean layer.
  This is consistent with Definition~\ref{definition: HAC} where the rank constraint $\mathrm{rank}(\bW) \le r$ becomes trivial.
\end{remark}

Our next goal is to upper bound the VC dimension of $\HAC$.

\begin{definition}
  [VC-dimension]
  Let $\mathcal{C} \subseteq \mathbb{B}^{\mathcal{X}}$ be a concept class over $\mathcal{X}$.
  A set $S:=\{x_1,\dots, x_n\} \subseteq \mathcal{X}$ is \emph{shattered} by $\mathcal{C}$ if for all sequences
  $(y_1,\dots, y_n) \in \mathbb{B}^n$, there exists $f \in \mathcal{C}$ such that $f(x_i) = y_i$ for all $i \in [n]$.
  The \emph{VC-dimension} of $\mathcal{C}$ is defined as 
  \[
    \mathtt{VC}(\mathcal{C}) = \sup \{|S| : S \subseteq \mathcal{X}, \, 
      \mbox{$S$ is shattered by $\mathcal{C}$}
    \}.
  \]
\end{definition}

The VC-dimension has many far-reaching consequences in learning theory and, in particular, classification.
One of these consequences is a sufficient (in fact also necessary) condition for \emph{uniform convergence} in the sense of the following well-known theorem.
See \cite{shalev2014understanding} Theorem 6.8.
\begin{theorem}
  \label{theorem: uniform deviation bound}
  Let $\mathcal{C}$ be a concept class over $\mathcal{X}$.
  There exists a constant $C>0$ such that for all joint distributions $P$ on $\mathcal{X} \times \mathbb{B}$ and all $f \in \mathcal{C}$,  we have
$
  |\hat {R}_{P,n}(f) - R_P(f)| \le 
  C
  \sqrt{
  ( \mathtt{VC}(\mathcal{C}) + \log(1/\delta))/{n}
  }
$
with probability at least $1-\delta$ with respect to the draw of $(X_1,Y_1),\dots, (X_n,Y_n)$.
\end{theorem}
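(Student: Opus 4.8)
The plan is to derive this as the standard uniform-convergence bound for a VC class, via the usual pipeline: reduce to the $0$--$1$ loss class, apply McDiarmid's inequality to concentrate the worst-case deviation around its mean, bound that mean by a Rademacher complexity through symmetrization, and control the Rademacher complexity by the VC dimension. I would fix a sample $Z_i := (X_i, Y_i)$, $i \in [n]$, and study $\Phi(Z_1, \dots, Z_n) := \sup_{f \in \mathcal{C}} |\hat R_{P,n}(f) - R_P(f)|$, disposing first of the trivial regimes: if $\mathtt{VC}(\mathcal{C}) = 0$ the supremum vanishes over the sample, and if $n \le \mathtt{VC}(\mathcal{C})$ the right-hand side already exceeds $1 \ge \Phi$ once $C \ge 1$, so we may assume $1 \le \mathtt{VC}(\mathcal{C}) < n$.

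First I would pass to the loss class $\mathcal{L} := \{(x,y) \mapsto \mathbb{I}\{f(x) \ne y\} : f \in \mathcal{C}\} \subseteq \{0,1\}^{\mathcal{X} \times \mathbb{B}}$; since $f \mapsto \ell_f$ preserves, on any finite set, the pattern of correct versus incorrect predictions, one has $\mathtt{VC}(\mathcal{L}) = \mathtt{VC}(\mathcal{C}) =: V$, and $\Phi = \sup_{\ell \in \mathcal{L}} |\frac{1}{n}\sum_i \ell(Z_i) - \mathbb{E}\,\ell(Z)|$. Next, $\Phi$ has the bounded-differences property with constant $1/n$ (altering one $Z_i$ moves every empirical mean by at most $1/n$), so McDiarmid's inequality gives $\Phi \le \mathbb{E}\,\Phi + \sqrt{\log(1/\delta)/(2n)}$ with probability at least $1 - \delta$; this produces the $\sqrt{\log(1/\delta)/n}$ contribution. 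A ghost-sample symmetrization then bounds $\mathbb{E}\,\Phi \le 2\, \mathbb{E}_{Z,\sigma} \sup_{\ell \in \mathcal{L}} |\frac{1}{n}\sum_i \sigma_i \ell(Z_i)|$, the expected Rademacher complexity of $\mathcal{L}$, where $\sigma_i$ are i.i.d.\ Rademacher signs.

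It remains to bound the Rademacher complexity by $O(\sqrt{V/n})$. The quick route: conditioned on $Z_{1:n}$, the projection $\{(\ell(Z_1), \dots, \ell(Z_n)) : \ell \in \mathcal{L}\}$ is a subset of $\{0,1\}^n$ whose cardinality is at most the growth function, which the Sauer--Shelah lemma bounds by $(en/V)^V$; Massart's finite-class lemma then gives empirical Rademacher complexity at most $\sqrt{2 V \log(en/V)/n}$. This already proves the theorem up to a stray $\sqrt{\log(n/V)}$ factor, and removing that factor is the main obstacle. To match the stated rate I would instead invoke Haussler's uniform bound on empirical $L_2$-covering numbers of a VC class, $N(\varepsilon, \mathcal{L}, L_2(P_n)) \le (C_0/\varepsilon)^{C_1 V}$ for every empirical measure $P_n$ and every $\varepsilon \in (0,1]$, and feed it into Dudley's entropy-integral bound, which yields empirical Rademacher complexity $\le \frac{C_2}{\sqrt n} \int_0^1 \sqrt{C_1 V \log(C_0/\varepsilon)}\, d\varepsilon \le C_3 \sqrt{V/n}$ since the integral converges. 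Combining this with the McDiarmid step and absorbing all universal constants yields $\Phi \le C\sqrt{(V + \log(1/\delta))/n}$ with probability at least $1 - \delta$, which is the uniform bound over $f \in \mathcal{C}$ claimed. (If one is content with an additional $\log n$ factor, as in some textbook formulations, the Massart-based argument already suffices and the chaining step can be dropped.)
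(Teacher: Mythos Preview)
Your proof sketch is correct and follows the standard route to this uniform-convergence bound (loss-class reduction, McDiarmid, symmetrization, then either Massart or Haussler--Dudley chaining to control the Rademacher complexity). However, the paper does not actually prove this theorem: it is stated as a well-known result with a pointer to \cite{shalev2014understanding}, Theorem~6.8, and is used only as a black box in the proof of \cref{theorem: minimax result}. So there is no ``paper's own proof'' to compare against; your write-up simply supplies a self-contained argument where the paper opted for a citation.
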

Note that the above VC bound is useless in the overparametrized setting if $\mathtt{VC}(\mathcal{C}) = \Omega(\mbox{\# of weights})$.
We now present our main result: an upper bound on the VC dimension of $\HAC$.
\begin{theorem}
  \label{theorem: VC dimension bound for HAC}
  Let $d,r,k \ge 1$ be integers and $\HAC$ be defined as in Definition~\ref{definition: HAC}. Then
  \[\mathtt{VC}(\HAC)
    \le 8\cdot\left(
    k(d+1) + k(d+1)(1+ \lceil \log_2 k \rceil)  + \binom{k}{\le r}
    \right).
  \]
\end{theorem}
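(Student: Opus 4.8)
The plan is to exhibit a (possibly improper) \emph{labeled sample compression scheme} for $\HAC$ and then convert its size into a VC bound via the classical argument of \citet{littlestone1986relating}, in the following explicit form, which I would record as a preliminary lemma: if a concept class $\mathcal{C}$ admits a compression scheme in which every realizable sample is compressed to a sub-sample of at most $p$ of its points together with at most $b$ auxiliary bits, and reconstruction from this data is consistent with the whole sample, then $\mathtt{VC}(\mathcal{C})\le 8(p+b)$. This follows by a Sauer-type double count: if $\mathcal{C}$ shatters $S=\{x_1,\dots,x_n\}$, the $2^n$ labelings of $S$ inject into the set of compressed representations (two distinct labelings force distinct reconstructions on $S$, hence distinct representations), and the number of representations is at most $\binom{n}{\le p}\,2^{p}\,2^{b}$; thus $2^n\le \binom{n}{\le p}2^{p+b}$, and splitting on whether $p\ge n/8$ — using $\binom{n}{\le n/8}\le 2^{0.55n}$ in the remaining case — yields $n\le 8(p+b)$. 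I would stress that the reconstructed hypothesis need not lie in $\mathcal{C}$; it is this freedom that makes the construction below go through.

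It then remains to compress. Fix a sample $(x_1,y_1),\dots,(x_n,y_n)$ realized by some $h\circ q_{\bW,\biasb}\in\HAC$ with $\mathrm{rank}(\bW)\le r$, so the compressor may use $\bW,\biasb,h$. For each coordinate $i\in[k]$, the binary labeling $x_j\mapsto (q_{\bW,\biasb}(x_j))_i=\sgn(\bw_i^\top x_j+b_i)$ of the sample is realized by an affine halfspace in $\mathbb{R}^d$; I would invoke the classical size-$(d+1)$ sample compression scheme for affine halfspaces (keep the at most $d+1$ support points of a maximum-margin separator; if the labeling is constant on the sample, keep one point whose recorded side is the constant value) to select at most $d+1$ of the $x_j$'s whose reconstruction reproduces the $i$-th sign pattern on all of $S$. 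Record each selected point with a tag in $[k]\times\mathbb{B}$ naming $i$ and the side $(q_{\bW,\biasb}(x_j))_i$, costing $1+\lceil\log_2 k\rceil$ bits; over all $i$ this is at most $k(d+1)$ tagged points and $k(d+1)(1+\lceil\log_2 k\rceil)$ bits. Separately, $q_{\bW,\biasb}$ sorts the sample points into cells, and since $\mathrm{rank}(\bW)\le r$ the number of realized sign vectors $\mathfrak{S}^\ast:=\{q_{\bW,\biasb}(x_j):j\in[n]\}$ is at most $|\mathfrak{S}_{\bW,\biasb}|\le\binom{k}{\le r}$: writing $\bw_i^\top x$ through an orthonormal basis of the $(\le r)$-dimensional column space of $\bW$ turns the arrangement into one of $k$ hyperplanes in $\mathbb{R}^r$, to which \eqref{equation: upper bound on the number of cells} applies with $r$ in place of $d$. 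For each $\bs\in\mathfrak{S}^\ast$ include one representative sample point $x_{j(\bs)}$ with $q_{\bW,\biasb}(x_{j(\bs)})=\bs$, retaining its label $y_{j(\bs)}=h(\bs)$; this adds at most $\binom{k}{\le r}$ points. Hence $p\le k(d+1)+\binom{k}{\le r}$ and $b\le k(d+1)(1+\lceil\log_2 k\rceil)$, and $8(p+b)$ is exactly the claimed bound.

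For the reconstruction map: from the tagged points recover, for each $i$, a halfspace $(\hat{\bw}_i,\hat b_i)$ by running the halfspace reconstruction rule on the points tagged $i$ with their recorded sides, obtaining $\hat\bW,\hat\biasb$ with $\sgn(\hat{\bw}_i^\top x_j+\hat b_i)=(q_{\bW,\biasb}(x_j))_i$ for every $x_j$ and every $i$, so that $q_{\hat\bW,\hat\biasb}$ agrees with $q_{\bW,\biasb}$ on the entire sample. Define $\hat h:\mathbb{B}^k\to\mathbb{B}$ by $\hat h(\bs)=y_j$ whenever a recorded representative $x_j$ satisfies $q_{\hat\bW,\hat\biasb}(x_j)=\bs$ (well defined, since distinct recorded representatives receive distinct values under $q_{\bW,\biasb}=q_{\hat\bW,\hat\biasb}$ on them), and $\hat h\equiv 1$ elsewhere. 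Then for each $i\in[n]$ the sign vector $q_{\hat\bW,\hat\biasb}(x_i)=q_{\bW,\biasb}(x_i)\in\mathfrak{S}^\ast$ has a recorded representative in the same cell, carrying the same label as $x_i$ because $h$ is constant on cells; hence $\hat h\circ q_{\hat\bW,\hat\biasb}(x_i)=y_i$, certifying the scheme.

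The hard part will be the middle step, and in particular keeping reconstruction honest even though $\hat\bW,\hat\biasb$ need not be combinatorially equivalent to $\bW,\biasb$ (it may have a different, even larger, set of cells and larger rank). Three observations resolve this: (i) a compression scheme only needs consistency on the given sample, and the halfspace sub-scheme delivers exactly agreement of each coordinate's sign pattern there; (ii) the Boolean part need only be pinned down on the $\le\binom{k}{\le r}$ \emph{realized} cells, and encoding it through labeled representative points rather than as an explicit truth table is what holds its cost to $\binom{k}{\le r}$ rather than $\binom{k}{\le r}\cdot k$; and (iii) improperness is allowed, so we need not certify $\mathrm{rank}(\hat\bW)\le r$. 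The remaining loose ends are more routine: supplying or citing the size-$(d+1)$ halfspace compression scheme and checking its degenerate (constant-labeling) case, and pinning down the constants in the compression-to-VC lemma.
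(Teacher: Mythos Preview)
Your proposal is correct and follows essentially the same approach as the paper: build a sample compression scheme of size $k(d+1)+k(d+1)(1+\lceil\log_2 k\rceil)+\binom{k}{\le r}$ by first compressing the hyperplane arrangement and then storing one labeled representative per realized cell, and conclude via \citet{littlestone1986relating}. The only difference is cosmetic: you compress the $k$ hyperplanes \emph{separately} via the standard size-$(d{+}1)$ halfspace/SVM scheme, whereas the paper bundles them into a single min-norm problem in $\mathbb{R}^{(d+1)k}$ and applies Carath\'eodory for the conical hull---but since that joint problem decouples coordinate-wise into $k$ independent max-margin problems, the two constructions coincide.
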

In the next section, we will prove this result using a sample compression scheme.
Before proceeding, we comment on the significance of the result.

\begin{remark}
  \label{remark: Boolean implementations}
Since $\binom{k}{\le r} = O(k^r)$, we have 
$
  \mathtt{VC}(\HAC) = 
  O(k^r + dk \log k)
  $ which only involves the input dimension $d$ and the width of the first two hidden layers $r$ and $k$.
  For constant $d$ and $r \ge 2$, 
  this reduces to $
  \mathtt{VC}(\HAC) = 
  O(k^r)$.
  In particular, the number of weights used by an architecture to implement the Boolean function $h$ does not affect the VC dimension at all.

For instance, \cite{mukherjee2017lower} Lemma 2.1 states that a 1-hidden layer neural network with ReLU activation can model any $k$-input Boolean function if the hidden layer has width $\ge 2^k$.
Note that this network uses $\ge k2^k$ weights, and $k2^k \gg k^r$ for fixed $r$ and $k$ large.

\cite{baldi2019capacity} study implementation of Boolean functions using threshold networks.
A consequence of their Theorem 9.3 is that a 2-hidden layer network with  widths $\ge c2^{k/2}/\sqrt{k}$ can implement all $k$ input Boolean functions, where $c$ is a constant not depending on $k$.
This requires $\ge  c^22^k/k$ weights which again is exponentially larger than $k^r$.
Furthermore, this lower bound on the weights is also necessary as $k\to \infty$.
\end{remark}

\section{A sample compression scheme}

  In this section, we will construct a sample compression scheme for $\HAC$.
As alluded to in the Related Work section, the size of a sample compression scheme upper bounds the VC-dimension of a concept class, which will be applied to prove 
  \cref{theorem: VC dimension bound for HAC}.
  We first recall the definition of  sample compression schemes with side information introduced in \cite{littlestone1986relating}. 

\begin{definition}
  Let $\mathcal{C}$ be a concept class.
  A length $n$ sequence $\{(x_i,y_i) \in \mathcal{X} \times \mathbb{B}\}_{i \in [n]}$
  is $\mathcal{C}$-\emph{labelled} if there exists $f \in \mathcal{C}$
  such that $f(x_i) = y_i$ for all $i \in [n]$.
  Denote by $L_{\mathcal{C}}(n)$ the set of $\mathcal{C}$-labelled sequences of length at most $n$.
  Denote by $L_{\mathcal{C}}(\infty)$ the set of all $\mathcal{C}$-labelled sequences of finite length.
  The concept class $\mathcal{C}$ over $\mathcal{X}$ has an
  \emph{$m$-sample compression scheme with $s$-bits of side information} 
  if there exists a pair of maps $(\rho, \kappa)$ where
  \[
    \kappa : L_{\mathcal{C}}(\infty) \to L_{\mathcal{C}}(m) \times \mathbb{B}^s,
    \quad
    \rho : L_{\mathcal{C}}(m) \times \mathbb{B}^s
    \to 
    \mathbb{B}^{\mathcal{X}}
  \]
  such that for all $\mathcal{C}$-labelled sequences $S := \{(x_i,y_i)\}_{i \in [n]}$, we have
  $\rho(\kappa(S))(x_i) = y_i$ for all $i \in [n]$.
  The \emph{size} of the sample compression scheme is $ \mathtt{size}(\rho,\kappa) := m + s$.
\end{definition}

Intuitively, $\kappa$ and $\rho$ can be thought of as the \emph{compression} and the \emph{reconstruction} maps, respectively.
The compression map $\kappa$ keeps $m$ elements from the training set and $s$ bits of additional information, which 
$\rho$ uses to reconstruct a classifier that correctly labels the uncompressed training set.

The main result of this section is:
\begin{theorem}
  \label{theorem: sample compression scheme}
  $\HAC$ has a sample compression scheme $(\rho,\kappa)$ of size 
  \[
    \mathtt{size}(\rho,\kappa)=
    k(d+1) + k(d+1)(1+ \lceil \log_2 k \rceil)  + \binom{k}{\le r}.
  \]
\end{theorem}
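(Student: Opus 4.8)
The plan is to construct the pair $(\kappa,\rho)$ explicitly: compress the $k$ hyperplanes one at a time, then compress the Boolean layer using the cell count in \eqref{equation: upper bound on the number of cells}. First I would normalize. Given a $\HAC$-labelled sequence $S=\{(x_i,y_i)\}_{i\in[n]}$ and a witness $h\circ q_{\bW,\biasb}$ with $\mathrm{rank}(\bW)\le r$, I shift each bias $b_j$ by an amount small enough that no sample lies on a hyperplane and no nonzero $\sgn(\bw_j^\top x_i+b_j)$ flips. This changes neither the labels of the $x_i$ nor $\mathrm{rank}(\bW)$; and since $\mathrm{rank}(\bW)\le r$ the map $q_{\bW,\biasb}$ factors through the projection of $\R^d$ onto an $r$-dimensional subspace, so \eqref{equation: upper bound on the number of cells} applied in dimension $r$ gives $|\mathfrak{S}_{\bW,\biasb}|\le\binom{k}{\le r}$. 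After the shift every $x_i$ is strictly off every $H_j$.

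For a single hyperplane $H_j$, I would use the (classical) fact that halfspaces in $\R^d$ admit a sample compression scheme of size $d+1$ with a canonical reconstruction. Writing $\sigma_i=\pm1$ for the side of $H_j$ containing $x_i$, the hard-margin program $\min\|\bw\|^2$ subject to $\sigma_i(\bw^\top x_i+b)\ge 1$ is feasible by the normalization, and its optimizer $(\bw_j',b_j')$ satisfies KKT conditions expressing $(\bw_j',0)$ as a nonnegative combination of the active vectors $\sigma_i(x_i,1)\in\R^{d+1}$; Carathéodory's theorem prunes this to at most $d+1$ of them, and re-solving the program on just those points (with their recorded signs) returns the same optimizer. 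So $\kappa$ keeps those $\le d+1$ points and, for each, records the side bit $\sigma_i$ together with the index $j$, at cost $1+\lceil \log_2 k \rceil$ bits; doing this for all $j$ keeps at most $k(d+1)$ points and uses at most $k(d+1)(1+\lceil \log_2 k \rceil)$ side bits, and lets $\rho$ recover the arrangement $(\bW',\biasb')$. By construction $H_j'$ sorts $x_1,\dots,x_n$ to the same sides as $H_j$, so $q_{\bW',\biasb'}(x_i)=q_{\bW,\biasb}(x_i)$ for every $i$.

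For the Boolean layer, since $h\circ q_{\bW,\biasb}$ labels $S$ correctly, every cell of $\{H_j\}$ is monochromatic on the samples it contains, and the set $\{q_{\bW',\biasb'}(x_i):i\in[n]\}=\{q_{\bW,\biasb}(x_i):i\in[n]\}$ has size at most $|\mathfrak{S}_{\bW,\biasb}|\le\binom{k}{\le r}$. I would therefore have $\kappa$ additionally keep one representative sample per occupied cell — at most $\binom{k}{\le r}$ more points, whose labels ride along in the retained sequence — and have $\rho$ define $h'(\bs)$ to be the label of the representative whose sign vector is $\bs$ when one exists and $+1$ otherwise, outputting $h'\circ q_{\bW',\biasb'}$ (which need not itself lie in $\HAC$, and need not). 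For any $x_i$, its cell has a representative $\tilde x$ with label $y_i$ and $q_{\bW',\biasb'}(x_i)=q_{\bW',\biasb'}(\tilde x)$, so $h'(q_{\bW',\biasb'}(x_i))=y_i$, which is exactly the requirement on $(\rho,\kappa)$. Counting, $\kappa$ keeps at most $k(d+1)+\binom{k}{\le r}$ points and uses at most $k(d+1)(1+\lceil \log_2 k \rceil)$ side bits, the claimed total; fixing the order of the retained points (support vectors first, then cell representatives, each block padded to full length by repetition, which changes neither reconstruction) lets $\rho$ parse its input.

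The hard part will be the single-hyperplane step: producing a size-$(d+1)$ compression scheme for halfspaces whose reconstruction is \emph{canonical}, so that $\rho$ recovers $H_j'$ from the retained points alone. This is what forces both the quadratic-program/Carathéodory argument and the initial perturbation ruling out sample points on a hyperplane (without it the margin program can be infeasible or degenerate). Once that is in place, the faithfulness identity $q_{\bW',\biasb'}(x_i)=q_{\bW,\biasb}(x_i)$ together with the cell count from \eqref{equation: upper bound on the number of cells} under $\mathrm{rank}(\bW)\le r$ make the Boolean part and the bookkeeping routine.
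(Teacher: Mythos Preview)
Your approach is essentially the paper's: normalize so every sample lies strictly off every hyperplane (the paper's Proposition~\ref{proposition: canonical form}), compress the arrangement via a minimum-norm quadratic program plus Carath\'eodory pruning (Proposition~\ref{proposition: caratheodory theorem}), then store one labelled representative per occupied cell using the $\binom{k}{\le r}$ bound. The only cosmetic difference is that you solve $k$ separate $(d{+}1)$-dimensional programs while the paper solves one $(d{+}1)k$-dimensional program that is separable across $j$; do note, however, that the paper minimizes $\|(\bW,\biasb)\|^2$ with the biases included, which is what guarantees a \emph{unique} optimizer and makes the ``re-solve on the pruned constraints'' step immediate---your objective $\min\|\bw\|^2$ is not strictly convex in $b$, so that step needs a short extra argument (or simply add $b^2$ to the objective).
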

The rest of this section will work toward the proof of \cref{theorem: sample compression scheme}.
The following result states that a $\mathcal{C}$-labelled sequence can be labelled  by a hyperplane arrangement classifier of a special form.
\begin{proposition}
  \label{proposition: canonical form}
Let $\{(x_i,y_i)\}_{i\in[n]}$ be $\HAC$-labelled. Then there exist $\bV
=[
  \bv_1 \cdots \bv_k
  ] \in \mathbb{R}^{d \times k}, \biasc \in \mathbb{R}^k$ and $h \in \Bool{k}$ such that for all $i \in [n]$, we have
      1)
      $y_i = h(\sgn(\bV^\top x_i + \biasc))$,
  2)
  $\mathrm{rank}(\bV) \le r$ and 3)
  $| \bv_j^\top x_i + c_j| \ge 1$ for all $i \in [n],j \in [k]$.
\end{proposition}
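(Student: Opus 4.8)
The plan is to start from an arbitrary member of $\HAC$ that realizes the given labels and massage it into the claimed canonical form in two steps: a bias perturbation that pushes every data point strictly off every hyperplane, followed by a global rescaling that enforces the margin condition~3).

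Since $\{(x_i,y_i)\}_{i\in[n]}$ is $\HAC$-labelled, Definition~\ref{definition: HAC} supplies $\bW \in \mathbb{R}^{d\times k}$ with $\mathrm{rank}(\bW)\le r$, a vector $\biasb \in \mathbb{R}^k$, and $h \in \Bool{k}$ with $y_i = h(\sgn(\bW^\top x_i + \biasb))$ for all $i$. The only reason we cannot just take $\bV = \bW$, $\biasc = \biasb$ is that some $x_i$ may lie exactly on the $j$-th hyperplane, i.e.\ $\bw_j^\top x_i + b_j = 0$; by the convention $\sgn(0)=1$ such a point contributes a $+1$ to the sign vector, and we must keep that sign while making the inner product strictly positive. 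So I would first fix, for each $j\in[k]$, a number $\epsilon_j$ with $0 < \epsilon_j < \min\{\,|\bw_j^\top x_i + b_j| : i\in[n],\ \bw_j^\top x_i + b_j \ne 0\,\}$ (taking $\epsilon_j$ to be any positive number if that set is empty), and set $\biasb' := \biasb + (\epsilon_1,\dots,\epsilon_k)^\top$. For a point with $\bw_j^\top x_i + b_j = 0$ we get $\bw_j^\top x_i + b_j' = \epsilon_j > 0$, preserving the sign $+1$; for a point with $\bw_j^\top x_i + b_j > 0$ the value stays positive; and for a point with $\bw_j^\top x_i + b_j < 0$ the choice of $\epsilon_j$ keeps it negative. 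Hence $\sgn(\bW^\top x_i + \biasb') = \sgn(\bW^\top x_i + \biasb)$ for every $i$, and moreover $\bw_j^\top x_i + b_j' \ne 0$ for all $i\in[n]$, $j\in[k]$.

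Next I would set $\delta := \min_{i\in[n],\,j\in[k]} |\bw_j^\top x_i + b_j'| > 0$ and define $\bV := \bW/\delta$ and $\biasc := \biasb'/\delta$. Scaling by the positive constant $1/\delta$ leaves every sign unchanged, so $\sgn(\bV^\top x_i + \biasc) = \sgn(\bW^\top x_i + \biasb') = \sgn(\bW^\top x_i + \biasb)$, which gives~1) with the same $h$; it preserves rank, giving~2) $\mathrm{rank}(\bV) = \mathrm{rank}(\bW) \le r$; and by the definition of $\delta$ we get $|\bv_j^\top x_i + c_j| = |\bw_j^\top x_i + b_j'|/\delta \ge 1$, which is~3). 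The only mildly delicate point, and the one place a naive argument could go wrong, is the \emph{direction} of the bias perturbation dictated by the asymmetric convention $\sgn(0)=1$: perturbing the biases the other way would flip exactly those points that sit on a hyperplane, so the perturbation must be chosen to keep such points on the closed positive side. Everything else reduces to taking minima over the finite set of nonzero values $\{\bw_j^\top x_i + b_j'\}$ and is routine.
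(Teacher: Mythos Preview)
Your proof is correct and follows essentially the same two-step approach as the paper: perturb the biases to push every data point strictly off every hyperplane, then rescale to achieve the unit margin. The only cosmetic differences are that you use a single global scale factor $\delta$ whereas the paper rescales each column by its own $\lambda_j = \min_i |\bw_j^\top x_i + \tilde c_j|$, and you are more explicit than the paper about the direction of the bias perturbation needed to respect the convention $\sgn(0)=1$.
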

The proof, given in Appendix~\ref{section: canonical form}, is similar to showing the existence of a max-margin separating hyperplane for a linearly separable dataset.

\begin{definition}
  Let $I$ be a finite set and let $\ba_i \in \mathbb{R}^n$ for each $i \in I$.
  Let $A = \{\ba_i\}_{i \in I}$.
  A \emph{conical combination} of $A$ is
  a linear combination $\sum_{i \in I} \lambda_i \ba_i$ where the weights $\lambda_i \in \mathbb{R}_{\ge 0}$ are nonnegative.
  The \emph{conical hull} of $A$, denoted $\mathtt{coni}(A)$, is the set of all conical combinations of $A$, i.e.,
  $
    \mathtt{coni}(\{\ba_i\}_{i \in I})
    :=
    \left\{ \sum_{i \in I} \lambda_i \ba_i : \lambda_i \in \mathbb{R}_{\ge 0} ,\,\forall i \in I\right\}.
  $
\end{definition}

The result below follows easily from 
  the Carath\'edory's theorem for the conical hull \citep{lovasz2009matching}.
  For the sake of completeness, we included the proof in Appendix~\ref{section: proof of caratheodory theorem}.
\begin{proposition}
  \label{proposition: caratheodory theorem}
  Let $\ba_1,\dots, \ba_m \in \mathbb{R}^n$ and $b_1,\dots, b_m \in \mathbb{R}$.
  For each subset $I \subseteq [m]$, define
  \[\mathcal{P}_I := \{ x \in \mathbb{R}^n : \ba_i^\top x \le b_i \, \forall i \in I\}.\]
  Then 1)
      $\min_{x \in \mathcal{P}_I} \frac{1}{2} \|x\|^2$ has a unique minimizer, denoted by $x^*_I$ below, and
      2)
  there exists a subset $J \subseteq [m]$ such that 
  $|J| = n$ and for all $I \subseteq [m]$ with $J \subseteq I$, we have $x^*_{[m]} = x^*_I$.
\end{proposition}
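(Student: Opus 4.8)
The plan is to dispatch Part 1 with the standard existence--uniqueness argument for projection onto a closed convex set, and to obtain Part 2 by pairing the KKT characterization of $x^*_{[m]}$ with the classical Carath\'eodory theorem for conical hulls. First I would observe that the statement is only meaningful when each $\mathcal{P}_I$ is nonempty, which holds as soon as $\mathcal{P}_{[m]}\neq\emptyset$ (dropping constraints only enlarges the feasible set), so I work under that hypothesis. Then $\mathcal{P}_I$ is closed and convex, being a finite intersection of closed half-spaces, and $x\mapsto\tfrac12\|x\|^2$ is continuous, coercive, and strictly convex; hence it attains its minimum on the nonempty closed set $\mathcal{P}_I$, and the minimizer --- call it $x^*_I$ --- is unique. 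This settles Part 1.

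For Part 2, since the constraints are affine the KKT conditions are necessary and sufficient for this convex program, so there exist multipliers $\lambda_i\geq 0$, $i\in[m]$, with $\lambda_i(\ba_i^\top x^*_{[m]}-b_i)=0$ for all $i$ and $x^*_{[m]}+\sum_{i\in[m]}\lambda_i\ba_i=0$. Therefore $-x^*_{[m]}=\sum_{i\in[m]}\lambda_i\ba_i$ is a conical combination of $\{\ba_i\}_{i\in[m]}$ whose support $S:=\{i:\lambda_i>0\}$ is contained in the active set $A:=\{i:\ba_i^\top x^*_{[m]}=b_i\}$. Applying Carath\'eodory for the conical hull to this representation yields $J_0\subseteq S\subseteq A$ with $|J_0|\leq n$ and nonnegative coefficients $\mu_i$, $i\in J_0$, such that $-x^*_{[m]}=\sum_{i\in J_0}\mu_i\ba_i$. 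I would then take $J$ to be any superset of $J_0$ inside $[m]$ with $|J|=n$ when $m\geq n$ (and $J=[m]$ otherwise, which makes the remaining claim vacuous).

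Finally I would verify $x^*_I=x^*_{[m]}$ for every $I$ with $J\subseteq I\subseteq[m]$. Feasibility $x^*_{[m]}\in\mathcal{P}_I$ is immediate from $\mathcal{P}_{[m]}\subseteq\mathcal{P}_I$. For optimality, $x^*_{[m]}$ satisfies the KKT conditions of $\min_{x\in\mathcal{P}_I}\tfrac12\|x\|^2$ using the multipliers $\mu_i$ for $i\in J_0\subseteq J\subseteq I$ and $0$ for the remaining indices of $I$: stationarity $x^*_{[m]}+\sum_{i\in J_0}\mu_i\ba_i=0$ holds by construction, complementary slackness holds because each $i\in J_0$ lies in $A$, and dual and primal feasibility are clear. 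By sufficiency of KKT together with the uniqueness from Part 1, $x^*_{[m]}$ is the minimizer over $\mathcal{P}_I$, i.e. $x^*_I=x^*_{[m]}$.

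The argument is mostly routine; the one subtlety --- and the reason I carry the inclusion $S\subseteq A$ along rather than invoking Carath\'eodory on an arbitrary conical representation --- is that when the constraint set shrinks from $[m]$ to $I$, the retained indices $J_0$ must still index active constraints at $x^*_{[m]}$, so that complementary slackness survives. An equivalent formulation replaces the KKT bookkeeping by the polyhedral identity $-x^*_{[m]}\in N_{\mathcal{P}_{[m]}}(x^*_{[m]})=\mathtt{coni}(\{\ba_i\}_{i\in A})$ for the normal cone, followed by the same Carath\'eodory step; I would use whichever is shorter in the final write-up.
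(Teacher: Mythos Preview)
Your argument is correct and mirrors the paper's approach: uniqueness from strong convexity for Part~1, then the KKT characterization of $x_{[m]}^*$ combined with Carath\'eodory for conical hulls to extract a size-$n$ index set for Part~2. Your explicit tracking of complementary slackness (carrying $J_0\subseteq S\subseteq A$) is in fact a welcome tightening of the paper's write-up, which phrases the optimality characterization as only ``$x\in\mathcal{P}_I$ and $-x\in\mathtt{coni}(\{\ba_i\}_{i\in I})$'' and would otherwise need exactly the observation you flag as the ``one subtlety.''
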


\begin{proof}[Proof of \cref{theorem: sample compression scheme}]
  Let $(x_i,y_i)$ be $\HAC$-realizable, and $\bV,\biasc$ and $h$ be as in Proposition~\ref{proposition: canonical form}.
  For each $i \in [n]$, define the Boolean vectors $\bs_i := \sgn(\bV^\top x_i + \biasc) \in \{\pm 1\}^k$ and $s_{ij} = \sgn(\bv_j^\top x_i + c_j)$ denote the $j$-th entry of $\bs_i$.
  Note that
  $
    s_{ij} ( \bv_j^\top x_i + c_j)= 
    | \bv_j^\top x_i + c_j| \ge 1
  $.

  We first outline the steps of the proof:
  \begin{enumerate}
    \item 
      Using a subset of the samples $\{(x_{i_\ell}, y_{i_\ell}) : \ell \in [d(k+1)]\}$
      with additional $k(d+1)(1 + \lceil \log_2 k \rceil)$ bits of side information $\{(s_{i_\ell j_\ell}, j_{\ell}) : \ell \in [d(k+1)]\}$, we can reconstruct
      $\overline{\bW}, \overline{\biasb}$
such that 
$\sgn(\overline{\bW}^\top x_i + \overline{\biasb}) = \bs_i$ for all $i \in [n]$.
\item Using an additional subset of samples $\{(x_{\iota_\ell}, y_{\iota_\ell}) : \ell =1,\dots, \binom{k}{\le r}\}$ in conjunction with the
  $\overline{\bW}, \overline{\biasb}$ reconstructed in the previous step, we can find $g \in \Bool{k}$ such that $g(s_i) = h(s_i)$ for all $i$.
  \end{enumerate}
  

  Now, consider the set
  \[
    \mathcal{P}:=
    \left\{ (\bW, \biasb) \in \mathbb{R}^{d \times k} \times \mathbb{R}^k : 
    s_{ij}  (\bw_j^\top x_i + b_j )\ge 1,\, \forall i \in [n],\, j \in [k]\right\}.
  \]
  Note that $\mathcal{P}$ is a convex polyhedron in $(d + 1)k$-dimensional space.
  Let $(\overline{\bW}, \overline{\biasb})$ be the minimum norm element of $\mathcal{P}$.
  Note that $\sgn(\overline{\bW}^\top x_i + \overline{\biasb}) = \sgn(\bV^\top x_i + \biasc) = \bs_i$ by construction.

  By Proposition~\ref{proposition: caratheodory theorem}, there exists a set of tuples
  \[
    \left\{(i_\ell,j_\ell)\right\}_{\ell = 1,\dots, (d+1)k},
    \mbox{ where }
    (i_\ell,j_\ell) \in [n]\times [k]
  \]
  such that $\overline{\bW}, \overline{\biasb}$ is also the minimum norm element of 
  \[
    \mathcal{P}':=
    \left\{
      (\bW, \biasb) \in \mathbb{R}^{d \times k} \times \mathbb{R}^k : 
    s_{i_\ell j_\ell}  (\bw_{j_\ell}^\top x_{i_\ell} + {b}_{j_\ell}) \ge 1,\, \ell = 1,\dots, d(k+1) \right\}.
  \]
  
  To encode the defining equations of $\mathcal{P}'$, we need to store
  \begin{equation}
    \label{equation: samples for reconstructioning quantizer}
    \mbox{samples $\{(x_{i_\ell}, y_{i_\ell})\}_{\ell=1}^{d(k+1)}$
  and side information $\{(s_{i_\ell j_\ell}, j_{\ell})\}_{\ell=1}^{d(k+1)}$.
}
\end{equation}
  Note that each $s_{i_\ell j_\ell}$ requires $1$ bit while each $j_\ell \in [k]$ requires $\lceil \log_2 k \rceil$ bits.
  In total, encoding $\mathcal{P}'$ requires storing $d(k+1)$ samples and $d(k+1)(1 + \lceil \log_2 k\rceil)$ of bits.

  To reconstruct $g \in \Bool{k}$ that agrees with $h$ on all the samples, it suffices to know $h$ when restricted to $\{\bs_i\}_{i=1}^n$.
  Since $\{\bs_i\}_{i=1}^n$ is a subset of $\mathfrak{S}_{\overline{\bW}, \overline{\biasb}}$, we have by \cref{equation: upper bound on the number of cells} that $|\{\bs_i\}_{i}^n| \le \binom{k}{\le r}$.
  Thus, $\{\bs_i\}_{i=1}^n$ has at most $\binom{k}{\le r}$ unique elements.
  Let
  $\left\{\bs_{{\iota}_\ell} : \ell = 1,\dots, \binom{k}{\le r}\right\}$
  be a set containing all such unique elements.
  Thus, we store 
  \begin{equation}
    \label{equation: samples for reconstructing the Boolean function}
    \mbox{samples $\{(x_{\iota_\ell},y_{\iota_\ell}) : \ell = 1,\dots, \binom{k}{\le r}\}$}.
  \end{equation}
  Using $\overline{\bW}, \overline{\biasb}$ as defined above, we have $\bs_{\iota_\ell} = \sgn(\overline{\bW}^\top x_{\iota_\ell} + \overline{\biasb})$.
  Now, simply choose $g$ such that $g(\bs_{\iota_\ell}) = y_{\iota_\ell}$ for all $\ell = 1,\dots, \binom{k}{\le r}$.

  To summarize, we formally define the compression  and reconstruction functions $(\kappa,\rho)$.
  Let $\kappa$ take the full sample $\{(x_i,y_i)\}_{i=1}^n$ and output the subsample (and side information) in \cref{equation: samples for reconstructioning quantizer} and \cref{equation: samples for reconstructing the Boolean function}.
  The reconstruction function $\rho$ first constructs $\overline{\bW}, \overline{\biasb}$ using \cref{equation: samples for reconstructioning quantizer}.
  Next, $\rho$ constructs $g$ using $\overline{\bW}, \overline{\biasb}$ and the samples of \cref{equation: samples for reconstructing the Boolean function}.
\end{proof}

Now, the following result\footnote{
See also \cite{naslund2017compression} Theorem 2 for a succinct proof.}
together with the sample compression scheme for $\HAC$ we constructed  imply \cref{theorem: VC dimension bound for HAC} from the previous section.
\begin{theorem}
  [\cite{littlestone1986relating}]
  If $\mathcal{C}$ has sample compression scheme $(\rho,\kappa)$, then $\mathtt{VC}(\mathcal{C}) \le 8\cdot \mathtt{size}(\rho,\kappa)$.
\end{theorem}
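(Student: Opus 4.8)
The plan is to fix an arbitrary finite set $S=\{x_1,\dots,x_d\}\subseteq\mathcal{X}$ that is shattered by $\mathcal{C}$ and to bound $d$ in terms of $\mathtt{size}(\rho,\kappa)=m+s$, where $m$ bounds the length of the compressed sequence output by $\kappa$ and $s$ is the number of side-information bits; taking the supremum over all shattered $S$ then gives $\mathtt{VC}(\mathcal{C})\le 8(m+s)$, and in particular shows the VC dimension is finite.

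The heart of the argument is an injectivity observation. For each subset $T\subseteq S$, shattering provides a concept $f_T\in\mathcal{C}$ with $f_T\equiv +1$ on $T$ and $f_T\equiv -1$ on $S\setminus T$, so the length-$d$ sequence $S_T:=\{(x_i,f_T(x_i))\}_{i\in[d]}$ lies in $L_{\mathcal{C}}(\infty)$. The defining property of the compression scheme forces $\rho(\kappa(S_T))(x_i)=f_T(x_i)$ for every $i\in[d]$; hence from $\kappa(S_T)$ alone one recovers the full labelling of $S$ under $f_T$, and therefore $T$ itself. Thus the map $T\mapsto\kappa(S_T)$, from the power set of $S$ into $L_{\mathcal{C}}(m)\times\mathbb{B}^s$, is injective.

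It remains to count the image and extract a linear bound. The compressed component of $\kappa(S_T)$ retains at most $m$ of the pairs occurring in $S_T$; since within $S_T$ each $x_i$ carries the fixed label $f_T(x_i)$, this component is determined by which indices are retained, i.e.\ by a subset of $[d]$ of size at most $m$. There are $\binom{d}{\le m}$ such subsets (using the notation of \cref{equation: upper bound on the number of cells}) and $2^s$ choices of side information, so injectivity yields $2^d\le\binom{d}{\le m}\,2^s$. The cases $m=0$ (where $2^d \le 2^s$) and $m\ge d$ give $d\le m+s\le 8(m+s)$ immediately; otherwise $1\le m<d$, and taking base-$2$ logarithms together with the standard estimate $\binom{d}{\le m}\le(ed/m)^m$ gives $d\le m\log_2(ed/m)+s$. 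Writing $x:=d/m>1$, if $x\le 8$ then $d=mx\le 8m\le 8(m+s)$, while if $x>8$ then $s\ge m(x-\log_2(ex))$, and the elementary inequality $7x+8\ge 8\log_2(ex)$ (valid for $x>8$: it holds at $x=8$, and the left side grows faster) rearranges to $8(m+s)\ge 8m\,(1+x-\log_2(ex))\ge mx=d$.

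I expect the injectivity step to be the only conceptually substantive point: it is precisely where the reconstruction guarantee is used, upgrading ``a short certificate suffices to relabel the sample'' to ``a short certificate determines the labelling, hence the subset.'' The remaining work is routine, with the only delicate spot being the count in the third paragraph — one must observe that the retained labelled pairs contribute no factor beyond $\binom{d}{\le m}$, their labels being inherited from $S_T$ rather than freely chosen, since a cruder count would inflate the constant past $8$.
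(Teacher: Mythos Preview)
The paper does not supply its own proof of this statement: it is quoted as a known result of \cite{littlestone1986relating}, with a footnote pointing to a succinct proof in the literature. So there is no in-paper argument to compare against.

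Your argument is the standard one and is correct. The injectivity of $T\mapsto\kappa(S_T)$ via the reconstruction guarantee, the count $\binom{d}{\le m}\cdot 2^s$ for the range of $\kappa$ restricted to labellings of the shattered set, and the elementary calculus to extract the constant $8$ from $2^d\le\binom{d}{\le m}2^s$ are exactly the ingredients of the classical proof. One small point worth making explicit: your count $\binom{d}{\le m}$ relies on the compressed sequence being a \emph{subsequence} of the input $S_T$ (so that it is determined by the retained indices, the order being inherited). The paper's Definition of $\kappa$ only writes the codomain as $L_{\mathcal{C}}(m)\times\mathbb{B}^s$ without stating this explicitly, but the subsequence requirement is the standard convention for sample compression schemes (and is how the paper itself uses $\kappa$ in the proof of \cref{theorem: sample compression scheme}); without it the range could be infinite and the counting step would collapse.
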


\begin{remark}
  Note that the reconstruction function $\rho$ is \emph{not} permutation-invariant.
  Furthermore, the overall sample compression scheme $\rho,\kappa$ is \emph{not} stable in the sense of \cite{hanneke2021stable}.
  In general, sample compression schemes with permutation-invariant $\rho$ 
 \citep{floyd1995sample}
 and 
 \emph{stable} sample compression schemes
  \citep{hanneke2021stable}
 enjoy tighter generalization bounds compared to ordinary sample compression schemes.
 We leave as an open question whether $\HAC$ has such specialized compression schemes.
\end{remark}

\section{Minimax-optimality for learning Lipschitz class}

In this section, we show that empirical risk minimization (ERM) over $\HAC$, for properly chosen $r$ and $k$, is minimax optimal for classification where the posterior class probability function is $L$-Lipschitz, for fixed $L >0$.
Furthermore, the choices for $r$ and $k$ is such that the associated HANN, the neural network realization of $\HAC$, is overparametrized for 
the Boolean function implementations discussed in Remark~\ref{remark: Boolean implementations}.

Below, let $X \in [0,1]^d$ and $Y \in \mathbb{B}$ be the random variables corresponding to a sample and label jointly distributed according to $P$.
Write $\eta_P(x) := P(Y = 1| X = x)$ for the posterior class probability function.

Let $\Sigma(L,[0,1]^d)$ denote the class of $L$-Lipschitz functions $f: [0,1]^d \to \mathbb{R}$, i.e.,
\begin{equation*}
  |f(x) - f(x')| \le L \| x - x'\|_2, \quad \forall x,x' \in [0,1]^d.
\end{equation*}
The following minimax lower bound result\footnote{
  The result we cite here is a special case of \cite[Theorem 3.5]{audibert2007fast}, which gives minimax lower bound for when $\eta_P$ has additional smoothness assumptions.
}
concerns classification when $\eta_P$ is $L$-Lipschitz:
\begin{theorem}
  [\cite{audibert2007fast}]
  There exists a constant $C>0$ such that
  \begin{equation*}
    \inf_{\tilde f_n}
    \sup_{P \,:\, \eta_P \in \Sigma(L,[0,1]^d)}
    \mathbb{E}[ R(\tilde f_n)] - 
    R^*_P
    \ge 
    C n^{-\frac{1}{d+2}}.
  \end{equation*}
\end{theorem}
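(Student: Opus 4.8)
The plan is to establish this by the classical Assouad (hypercube) construction for nonparametric classification, following \cite{audibert2007fast}; since the bound is quoted from that paper, I only outline the argument. The goal is to produce a finite family $\{P_\sigma : \sigma \in \{\pm1\}^M\}$ of distributions, each with $L$-Lipschitz posterior, such that neighbouring members are statistically indistinguishable from $n$ i.i.d.\ samples, yet any classifier incurs excess risk of order $n^{-1/(d+2)}$ on at least one member.

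First I would partition $[0,1]^d$ into $M=m^d$ congruent subcubes of side $1/m$, with $m$ tuned at the end. In subcube $j$ I place a fixed bump $\phi_j:[0,1]^d\to[0,1]$, a $1$-Lipschitz function equal to $1$ on a concentric subcube and identically $0$ outside a slightly larger concentric subcube, so that distinct bumps have disjoint supports separated by ``dead zones''. For $\sigma\in\{\pm1\}^M$ define
\[
  \eta_{P_\sigma}(x) \;=\; \tfrac12 + \delta \sum_{j=1}^{M}\sigma_j\,\phi_j(x), \qquad \delta \asymp L/m ,
\]
and let $P_X$ be uniform on $[0,1]^d$ (equivalently, put mass $w\asymp m^{-d}$ on each bump core). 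The scaling $\delta\asymp L/m$ is exactly what is needed for each summand, hence the whole sum, to be $L$-Lipschitz, so $\eta_{P_\sigma}\in\Sigma(L,[0,1]^d)$. The Bayes classifier of $P_\sigma$ is $x\mapsto\sgn(\eta_{P_\sigma}(x)-\tfrac12)$, which on the core of subcube $j$ equals $\sigma_j$; hence recovering $\sigma$ is necessary to reach the Bayes risk.

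Next I would bound the two competing quantities. From the identity $R_{P}(f)-R_{P}^{*}=\int|2\eta_{P}-1|\,\mathbb{I}\{f\ne f_P^{*}\}\,dP_X$, misclassifying the core of one subcube costs at least $c\,w\,\delta\asymp m^{-(d+1)}$ in excess risk, so the total regret of a classifier is at least $c\,w\,\delta$ times its number of ``wrong'' coordinates. On the information side, if $\sigma$ and $\sigma'$ differ in a single coordinate then $P_\sigma$ and $P_{\sigma'}$ agree outside one subcube, where the conditional label law is $\mathrm{Bernoulli}(\tfrac12\pm\delta)$, so $\KL(P_\sigma\,\|\,P_{\sigma'})\le c'\,w\,\delta^2\asymp m^{-(d+2)}$ and, by tensorization, $\KL(P_\sigma^{\otimes n}\,\|\,P_{\sigma'}^{\otimes n})\le c'\,n\,m^{-(d+2)}$. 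Assouad's lemma (with Pinsker's inequality) then gives
\[
  \inf_{\tilde f_n}\;\sup_{\sigma}\;\mathbb{E}\big[R_{P_\sigma}(\tilde f_n)\big]-R_{P_\sigma}^{*}
  \;\gtrsim\; M\,w\,\delta\,\Big(1-c''\sqrt{n\,m^{-(d+2)}}\Big)
  \;\asymp\; m^{-1}\Big(1-c''\sqrt{n\,m^{-(d+2)}}\Big).
\]
Choosing $m\asymp n^{1/(d+2)}$ (with a large enough implied constant) keeps the bracket bounded below by a positive constant, giving the right-hand side $\ge C\,n^{-1/(d+2)}$; since this finite family lies inside $\{P:\eta_P\in\Sigma(L,[0,1]^d)\}$, the stated minimax lower bound follows.

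The hard part will be the careful bookkeeping that makes the construction rigorous: engineering the bumps and dead zones so that $\eta_{P_\sigma}$ is \emph{globally} $L$-Lipschitz rather than merely Lipschitz within each subcube, and tracking the constants through the $\KL$ estimate and Assouad's lemma so that the $m\asymp n^{1/(d+2)}$ trade-off yields an honest constant $C$. These are routine once the scales $\delta\asymp1/m$ and $w\asymp m^{-d}$ are fixed; full details are in \cite{audibert2007fast}.
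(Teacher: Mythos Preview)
The paper does not prove this theorem; it simply quotes the result from \cite{audibert2007fast} as a known minimax lower bound, with no accompanying argument. Your sketch via Assouad's hypercube construction is the standard route and is essentially the method used in the cited reference, so there is nothing in the paper to compare against beyond noting that your outline is consistent with the source.
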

The infimum above is taken over all possible learning algorithms $\tilde f_n$, i.e., mappings from $(\mathcal{X} \times \mathbb{B})^n$ to Borel measurable functions $\mathcal{X} \to \mathbb{B}$.
When $\hat f_n$ is an empirical risk minimizer (ERM) over $\HAC$
where $d = r$ for $k = n^{\frac{1}{d+2}}$
, this minimax rate is achieved.

\begin{theorem}
  \label{theorem: minimax result}
  Let $d \ge 1$ be fixed.
  Let $\hat f_n$ be an ERM over $\mathtt{HAC}(d,d,k)$ where $k = k(n) \asymp  n^{\frac{1}{d+1}}$. Then
  there exists a constant $C'$ such that
  \[
    \sup_{P \,:\, \eta_P \in \Sigma(L,[0,1]^d)}
    \mathbb{E}[ R(\hat f_n)] - 
    R^*_P
    \le
    C' n^{-\frac{1}{d+2}}.
  \]
\end{theorem}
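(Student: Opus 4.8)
The proof proceeds by the standard approximation--estimation decomposition for ERM over a VC class: one exhibits a good comparator inside $\mathtt{HAC}(d,d,k)$, controls the estimation error via \cref{theorem: VC dimension bound for HAC} and \cref{theorem: uniform deviation bound}, and then optimizes over $k$.

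\emph{Step 1 (reduction to a comparator).} Fix any $f^\dagger\in\mathtt{HAC}(d,d,k)$. Since $\hat f_n$ is an ERM, $\hat R_{P,n}(\hat f_n)\le\hat R_{P,n}(f^\dagger)$, so \cref{theorem: uniform deviation bound} (applied to both $\hat f_n$ and $f^\dagger$) gives, with probability at least $1-\delta$,
\[
  R_P(\hat f_n) - R_P^* \;\le\; \bigl(R_P(f^\dagger) - R_P^*\bigr) \;+\; 2C\sqrt{\frac{\mathtt{VC}(\mathtt{HAC}(d,d,k)) + \log(1/\delta)}{n}}.
\]
By \cref{theorem: VC dimension bound for HAC} with $r=d$ and $d$ fixed, $\mathtt{VC}(\mathtt{HAC}(d,d,k)) = O(k^d + k\log k) = O(k^d)$; for $d=1$ I would instead invoke the elementary fact that $k$ hyperplanes on the line produce only $k+1$ cells, giving $\mathtt{VC}(\mathtt{HAC}(1,1,k))\le k+1$, which avoids a spurious logarithmic factor. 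Hence the estimation term is $\tilde O(\sqrt{k^d/n})$. Taking $\delta\asymp 1/n$ and using $R_P(\hat f_n)-R_P^*\le 1$ deterministically, one converts this high-probability bound into a bound on $\mathbb{E}[R(\hat f_n)] - R_P^*$ in which the $\sqrt{\log(1/\delta)/n}$ contribution is of strictly lower order than $n^{-1/(d+2)}$.

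\emph{Step 2 (approximation by a grid arrangement).} I would take $f^\dagger$ to be a grid classifier. Because $r=d$, the rank constraint in \cref{definition: HAC} is vacuous, so the $k$ hyperplanes may be chosen axis-parallel, with $\lfloor k/d\rfloor$ of them orthogonal to each coordinate axis; these tile $[0,1]^d$ into congruent cubes of side at most $d/k$, and each cube is exactly one cell of the arrangement. Since $h$ ranges over all of $\Bool{k}$, we may assign each cell an arbitrary label: set $f^\dagger\equiv\sgn(2\eta_P(x_Q)-1)$ on cube $Q$, where $x_Q\in Q$ is any reference point, and let $f_P^*$ denote the Bayes classifier. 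Using the classical identity $R_P(f)-R_P^* = \mathbb{E}\bigl[\,|2\eta_P(X)-1|\,\mathbb{I}\{f(X)\neq f_P^*(X)\}\,\bigr]$ together with $L$-Lipschitzness of $\eta_P$: at any $x\in Q$ where $f^\dagger$ disagrees with $f_P^*$, the quantities $2\eta_P(x)-1$ and $2\eta_P(x_Q)-1$ have opposite signs, so $|2\eta_P(x)-1|\le 2|\eta_P(x)-\eta_P(x_Q)|\le 2L\sqrt d\cdot(d/k)$. Therefore $R_P(f^\dagger)-R_P^*\le 2Ld^{3/2}/k = O(1/k)$, and this bound is uniform over all $P$ with $\eta_P\in\Sigma(L,[0,1]^d)$ since the constant depends only on $L$ and $d$.

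\emph{Step 3 (balancing) and the main obstacle.} Combining Steps 1 and 2, $\sup_{P:\,\eta_P\in\Sigma(L,[0,1]^d)}\mathbb{E}[R(\hat f_n)]-R_P^* = O(1/k)+\tilde O(\sqrt{k^d/n})$; choosing $k\asymp n^{1/(d+2)}$ equates the two orders at $n^{-1/(d+2)}$ (the residual $\log$ factors are $o(n^{-1/(d+2)})$ because $1/(d+2)<1/2$), which yields the constant $C'$. The only genuinely non-routine ingredient is Step~2: one must verify that the grid classifier actually lies in $\mathtt{HAC}(d,d,k)$ (rank constraint vacuous, $h$ unconstrained on realizable sign vectors, grid cubes equal arrangement cells) and, crucially, convert the $O(1/k)$ \emph{diameter} of a cell into a bound on the \emph{excess risk} rather than merely the misclassification probability, via the $|2\eta_P-1|$-weighted identity. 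The remaining pieces --- the ERM/uniform-convergence step, the high-probability-to-expectation conversion, and the algebra of the tradeoff --- are standard.
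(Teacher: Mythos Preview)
Your proposal is correct and follows the same overall architecture as the paper's proof: the approximation--estimation decomposition for ERM, a grid classifier as comparator inside $\mathtt{HAC}(d,d,k)$, the VC bound of \cref{theorem: VC dimension bound for HAC} for the estimation term, and balancing at $k\asymp n^{1/(d+2)}$ (which is indeed what the paper's proof uses, despite the exponent $1/(d+1)$ printed in the theorem statement).

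The one substantive difference is your Step~2. The paper takes as comparator the \emph{histogram classifier} $\tilde f$, which labels each cube $\mathcal{R}_i$ by the sign of $P(Y=1\mid X\in\mathcal{R}_i)-\tfrac12$, and then bounds $R(\tilde f)-R^*$ by a case analysis: on any cube where $\tilde f$ disagrees with the Bayes rule somewhere, continuity forces a point $x'$ with $\eta_P(x')=\tfrac12$, and the excess risk on that cube is bounded by $2\int_{\mathcal{R}_i^+}(\eta_P(x)-\eta_P(x'))\,dP(x)\le 2L\sqrt d\,P(\mathcal{R}_i)/\tilde k$. You instead take a \emph{plug-in} comparator $f^\dagger=\sgn(2\eta_P(x_Q)-1)$ at a reference point and invoke the identity $R_P(f)-R_P^*=\mathbb{E}\bigl[\,|2\eta_P(X)-1|\,\mathbb{I}\{f(X)\ne f_P^*(X)\}\,\bigr]$ together with the sign-change observation $|a|\le|a-b|$ when $a,b$ have opposite signs. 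Both routes yield the same $2Ld^{3/2}/k$ bound; yours is shorter and avoids the case split and the intermediate-value argument. Your explicit handling of the $d=1$ case (bounding $\mathtt{VC}(\mathtt{HAC}(1,1,k))\le k+1$ directly from the cell count) is also a nice touch, since the $k\log k$ term in \cref{theorem: VC dimension bound for HAC} would otherwise produce a spurious $\sqrt{\log n}$ factor that the paper's write-up glosses over.
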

\emph{Proof sketch (see Appendix~\ref{section: minimax result proof} for full proof).} 
We first show that the histogram classifier over the standard partition of $[0,1]^d$ into smaller cubes is an element of $\mathcal{C} := \mathtt{HAC}(d,d,k)$, thus
reducing the problem to proving minimax-optimality of the histogram classifier.
Previous work \cite{gyorfi2006distribution} Theorem 4.3 established this for the histogram \emph{regressor}.
The analogous result for the histogram \emph{classifier}, to the best of our knowledge, has not appeared in the literature and thus is included for completeness.

The neural network implementation of $\mathtt{HAC}(d,d,k)$ where $k \asymp n^{1/(d+2)}$ in \cref{theorem: minimax result} can be overparametrized.
Using either the 1- or the 2-hidden layer neural network implementations of Boolean functions as in Remark~\ref{remark: Boolean implementations}, the resulting HANN is overparametrized and has number of weights either $\ge k2^k$ or $\ge c^22^k/k$ respectively.
Both lower bounds on the number of weights are exponentially larger than $n$ meanwhile $\mathtt{VC}(\mathtt{HAC}(d,d,k)) = o(n)$.

\section{Empirical results}

In this section, we discuss experimental results of using HANNs for classifying synthetic and real datasets.
Our implementation uses TensorFlow \citep{abadi2016tensorflow} with the Larq \citep{larq} library for training neural networks with threshold activations.
Note that \cref{theorem: minimax result} holds for ERM over HANNs, which is intractable in practice.

\textbf{Synthetic datasets.}
We apply a HANN (model specification shown in \cref{fig: moons}-top left)  to the \textsc{moons} synthetic dataset with two classes with the hinge loss.

The heuristic for training networks with threshold activation can significantly affect the performance \citep{kim2019binaryduo}.
We consider two of the most popular heuristics: 
the straight-through-estimator (SteSign)
and 
the SwishSign, introduced by
\cite{hubara2017quantized}
and
\cite{darabi2020regularized}, respectively.
SwishSign reliably leads to higher validation accuracy (\cref{fig: moons}-bottom left), consistent with the finding of \cite{darabi2020regularized}. 
Subsequently, we use SwishSign and plot a learned decision boundary
in \cref{fig: moons}-right.
\begin{figure}[htpb]
  \centering
  \includegraphics[width=0.5\linewidth]{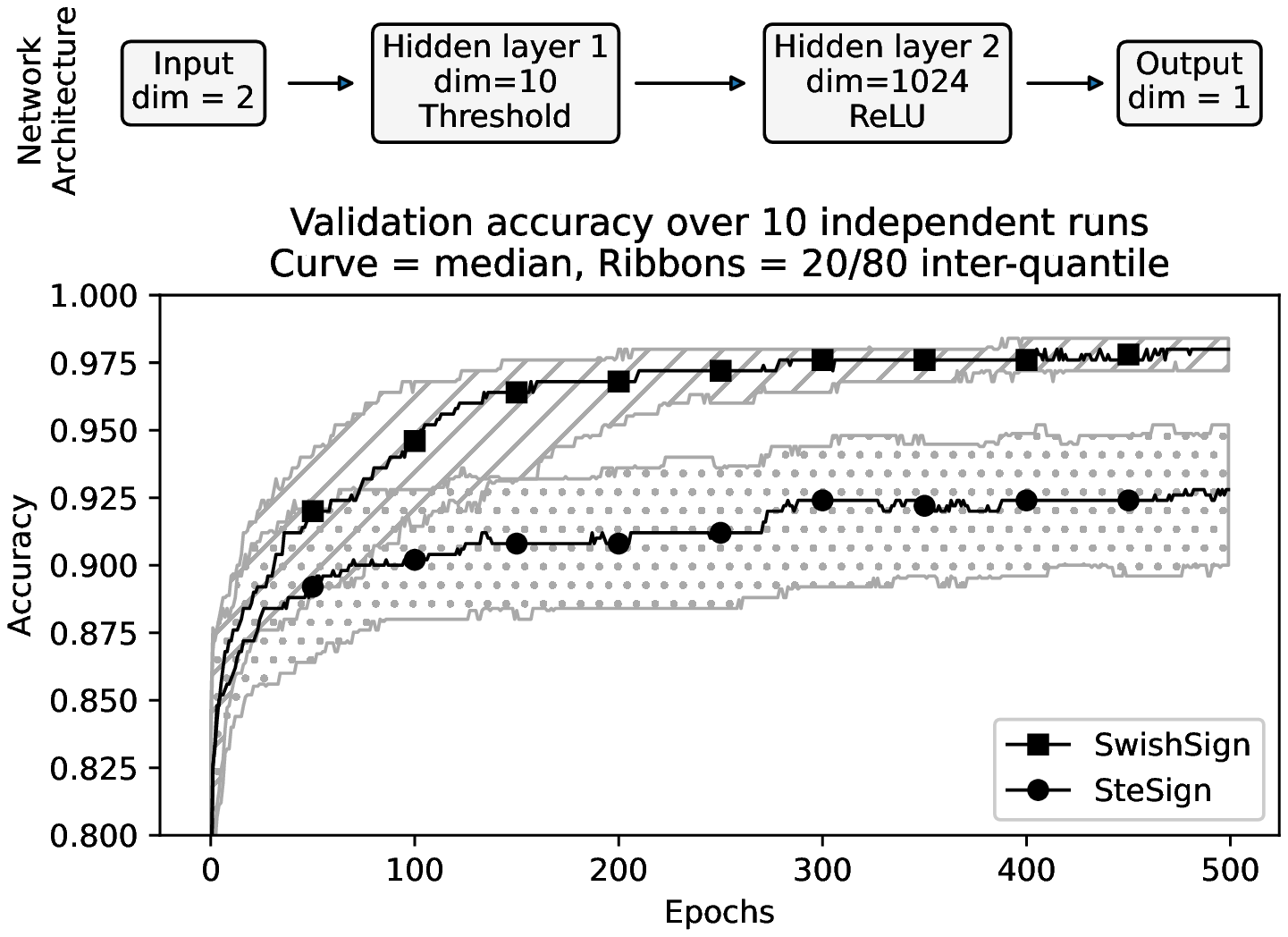}
  ~
  \includegraphics[width=0.48\linewidth]{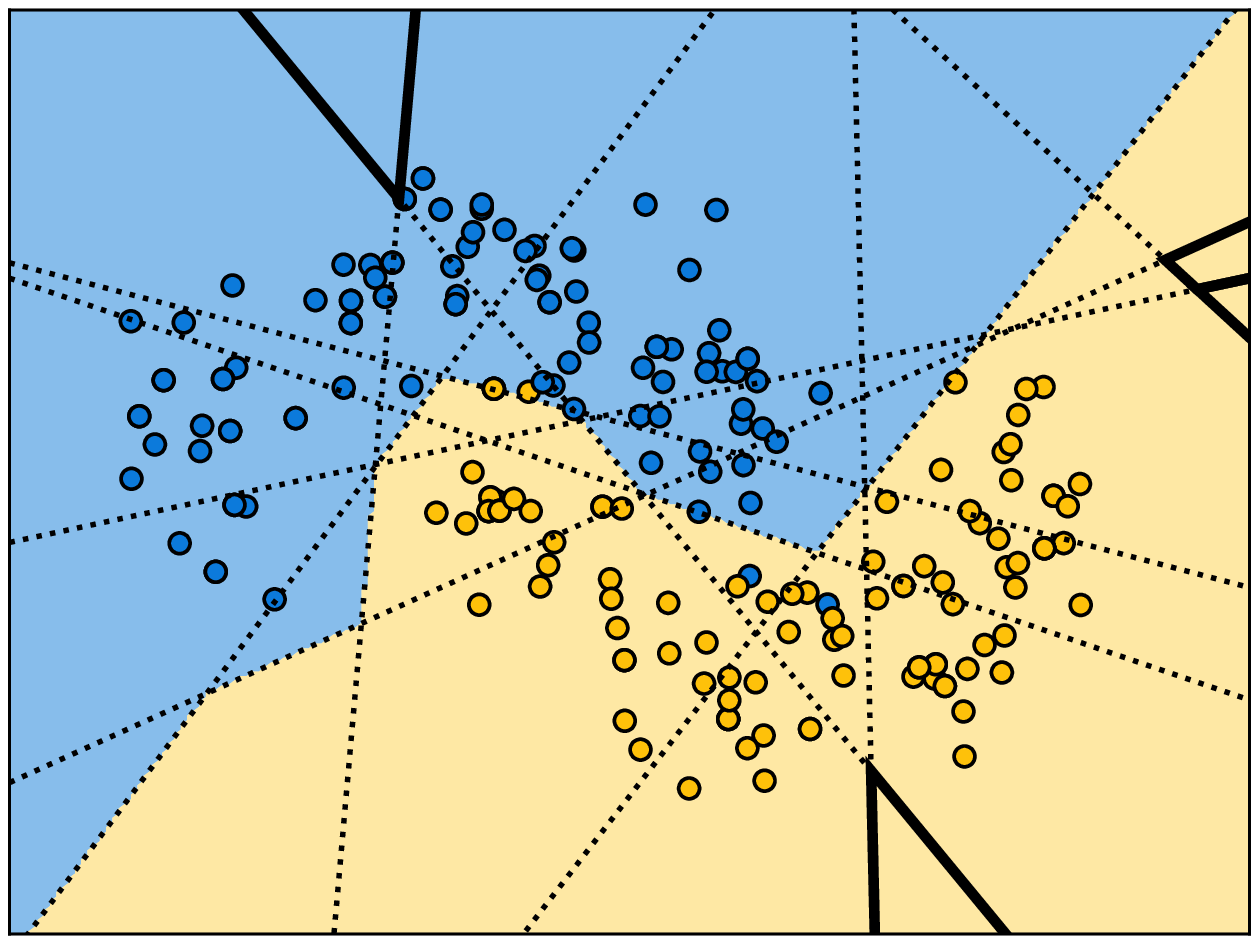}
  \caption{\emph{Top left.} Architecture of HANN used for the \textsc{moons} dataset.
    \emph{Bottom left.} Validation accuracies from 10 independent runs with random initialization and data generation.
    \emph{Right.}
    Data points (circles) drawn from \texttt{make\_moons} in \texttt{sklearn} colored by ground truth labels.
    The hyperplane arrangement is denoted by dotted lines.
  Coloring of the cells corresponds to the decision region of the trained classifier.
  A cell $\Delta$ is highlighted by bold boundaries if 1) no training data lies in $\Delta$ and 2) $\Delta$ does not touch the decision boundary.
}%
  \label{fig: moons}
\end{figure}

By \cite{mukherjee2017lower} Lemma 2.1, any Boolean function $\mathbb{B}^{k} \to \mathbb{B}$ can be implemented by a 1-hidden layer ReLU network with $2^k$ hidden nodes.
Here, the width of the hidden layer is $2^{10} = 1024$.
Thus, the architecture in \cref{fig: moons} can assign labels to the bold boundary cells arbitrarily without changing the training loss.
Nevertheless, the optimization appears to be biased toward a topologically simpler classifier.
This behavior is consistently reproducible. See \cref{fig: more moons}.

\textbf{Real-world datasets.}
\cite{klambauer2017self} introduced \emph{self-normalizing neural networks} (SNN) which were shown to outperform other neural networks on a panel of 121 UCI datasets.
Subsequently, \cite{wu2018improved} proposed the \emph{dendritic neural network} architecture, which further improved classification performance on this panel of datasets.
Following their works,
we evaluate the performance of HANNs on the 121 UCI datasets.

A crucial hyperparameter for HANN is $k$, the number of hyperplanes used.
We ran the experiments with $k \in \{15,100\}$ to test the hyperparameter's impact on accuracy.
The Boolean function $h$ is implemented as a 1-hidden layer residual network \citep{he2016deep} of width $1000$.

We use the same train, validation, and test sets from 
the public code repository of \cite{klambauer2017self}.
The reported accuracies on the held-out test set are based on the best performing model according to the  validation set.
The models will be referred to as \texttt{HANN15} and \texttt{HANN100}, respectively.
The results are shown in \cref{figure: empirical results}.
The accuracies of SNN and DENN are obtained from Table A1 in the supplemental materials of \cite{wu2018improved}.
Full details for the training and accuracy tables can be found in the appendix.

\begin{figure}[h]
\begin{center}
  \includegraphics[width = 1 \textwidth]{}
\end{center}
\caption{Each blue tick above the x-axis represents a single dataset, where the x-coordinate of the tick is the difference of the accuracy of HANN and either SNN (left) or DENN (right) on the dataset.
  The solid black curves are kernel density estimates for the blue ticks.
The number of hyperplanes used by HANN is either 15 (top) or 100 (bottom).
The quantities shown in the top-left corner of each subplot are the median, 20-th and 80-th quantiles of the differences, respectively, rounded to 1 decimal place.}
\label{figure: empirical results}
\end{figure}

The \texttt{HANN15} model (top row of \cref{figure: empirical results}) already achieves median accuracy within 1.5\% of both SNN and DENN.
With the larger \texttt{HANN100} model (bottom row), the gap is reduced to zero.
The largest training set in this panel of datasets has size $77904$.
The \texttt{HANN15} and \texttt{HANN100} models use $\approx 10^4$ and $10^5$ weights, respectively.
By comparison, the average number of weights used by SNN is $\ge 5*10^5$, while the number of weights used by DENN is at least $\ge 2.5*10^5$.
Thus, all three models considered here, namely HANN, SNN and DENN, are overparametrized for this panel of datasets.

\section{Discussion}

We have introduced an architecture for which the VC theorem can be used to prove minimax-optimality of ERM over HANNs in an overparametrized setting with Lipschitz posterior.
To our knowledge, this is the first time VC theory has been used to analyze the performance of a neural network in the overparametrized regime.
Furthermore, the same architecture leads to state-of-the-art performance over a benchmark collection of unstructured datasets.



\subsubsection*{Reproducibility Statement}
All code for downloading and parsing the data, training the models, and generating plots in this manuscript are available at \url{https://github.com/YutongWangUMich/HANN}.
Complete proofs for all novel results are included in the main article or in an appendix.

\bibliography{references}
\bibliographystyle{iclr2022_conference}

\appendix

\section{Proofs}

\subsection{Proof of Proposition~\ref{proposition: canonical form}}
\label{section: canonical form}
  By definition, there exists $h \in \Bool{k}$, $\bW \in \mathbb{R}^{d \times k}$ of rank at most $r$, and $\biasb \in \mathbb{R}^k$ such that 
  $y_i = h(\sgn(\bW^\top x_i + \biasb))$.

  Now, let $j \in [k]$ be fixed.
  Since
  $|\bw_j^\top x_i + b_j| \ge 0$ for all $i \in [n]$, there exists a small perturbation $\tilde c_j$ of $b_j$ such that
  $|\bw_j^\top x_i + \tilde c_j| > 0$ for all $i \in [n]$.
  Now, let $\lambda_j := \min_{i\in [n]} |\bw_j^\top x_i + \tilde c_j|$ which is positive.
  Define $\bv_j := \bw_j/\lambda_j$ and $c_j = \tilde c_j/\lambda_j$, we have 
  $|\bv_j^\top x_i + c_j| \ge 1$ for all $i \in [n]$, as desired.
Note that $\mathrm{rank}(\bV) = \mathrm{rank}(\bW)$.
\hfill$\Box$

\subsection{Proof of Proposition~\ref{proposition: caratheodory theorem}}
\label{section: proof of caratheodory theorem}
  Let $g_i(x) = \ba_i^\top x - b_i$ for each $i \in [m]$ and $f(x) = \frac{1}{2} \|x\|^2_2$.
  Then $\nabla f(x) = x$ and $\nabla g_i(x) = \ba_i$.
  By definition, $x_I^*$ is a minimizer of 
  \[\min_{x \in \mathbb{R}^n} f(x) \mbox{ s.t. } g_i(x) \le 0,\, \forall i \in I,\]
  which is a convex optimization with strongly convex objective.
  Thus, the minimizer $x_I^*$ is unique and furthermore is the unique element $x$ of $\mathbb{R}^n$ satisfying the KKT conditions:
  \begin{equation*}
    x \in \mathcal{P}_I
    \mbox{ and }
    \exists \mbox { a set of nonnegative weights $\{\lambda_i\}_{i \in I}$ such that }
    -x = \sum_{i\in I} \lambda_{i} \ba_i.
  \end{equation*}
  
  Thus, $x_I^*$ can be equivalently characterized as
  the unique element of $x \in \mathbb{R}^n$ satisfying
  \begin{equation}
    x  \in \mathcal{P}_I
    \mbox{ and }
    -x \in \mathtt{coni}(\{\ba_i\}_{i \in I}).
  \end{equation}
  In particular, $x_{[m]}^* \in \mathcal{P}_{[m]}$ and 
  $-x_{[m]}^* \in \mathtt{coni}(\{\ba_i\}_{i \in [m]})$. By the Carath\'edory's theorem for the conical hull \citep{lovasz2009matching}, there exists $\underline{I} \subseteq [m]$ such that $|\underline{I}| = n$ and  $-x_{[m]}^* \in \mathtt{coni}(\{\ba_i\}_{i \in \underline{I}})$.
  Thus, for any $J \subseteq [m]$ such that $\underline{I} \subseteq J$, we have $-x_{[m]}^* \in \mathtt{coni}(\{\ba_i\}_{i\in J})$.
  Furthermore, 
  $J \subseteq [m]$ implies $\mathcal{P}_J \supseteq \mathcal{P}_{[m]}$.
  In particular, $x_{[m]}^* \in \mathcal{P}_J$.
  Putting it all together, we have $x_{[m]}^* \in \mathcal{P}_J$ and $-x_{[m]}^* \in \mathtt{coni}(\{\ba_i\}_{i \in J})$.
  By the uniqueness, we have $x_{J}^* = x_{[m]}^*$.
\hfill $\Box$

\subsection{Proof of \cref{theorem: minimax result}}
\label{section: minimax result proof}

In this proof, the constant $C$ does not depending on  $n$, and may change from line to line.

  We  fix a joint distribution $P$ such that $\eta_P \in \Sigma(L,[0,1]^d)$ throughout the proof. Thus, the notation for risks will omit the $P$ in their subscript, e.g., we write $\hat R_{n}(f)$ instead of $\hat R_{P,n}(f)$ and $R^*$ instead of $R^*_P$.
  Below, let $\beta>\alpha >0$ be constants such that $\alpha dn^{1/(d+2)} \le k \le \beta dn ^{1/(d+2)}$.
  Let $\tilde{k} := \lceil k/d \rceil$.

  Let $\mathcal{R}_1, \mathcal{R}_2, \dots , \mathcal{R}_{\tilde{k}^d}$ denote the hypercubes of side length $\ell = 1/\tilde{k}$ forming a partition of $[0,1]^d$.
  For each $i \in [\tilde{k}^d]$, let $\mathcal{R}_{i}^{-} := \{ x \in \mathcal{R}_i : \eta_P(x) < 1/2\}$ and $\mathcal{R}_{i}^{+} := \{ x \in \mathcal{R}_i: \eta_P(x) \ge 1/2\}$.

  Let $\tilde f : [0,1]^d \to \mathbb{B}$ be the classifier such that 
\begin{equation*}
  \tilde f(x) = 
  \begin{cases}
    +1 &: x \in \mathcal{R}_i, \,
    \int_{\mathcal{R}_i} \eta_P(x) dP(x)\ge \int_{\mathcal{R}_i} (1-\eta_P(x)) dP(x) 
    \\
    -1 &: x \in \mathcal{R}_i,\,
    \int_{\mathcal{R}_i} \eta_P(x) dP(x) <  \int_{\mathcal{R}_i} (1-\eta_P(x)) dP(x).
  \end{cases}
\end{equation*}
In other words, $\tilde f$ classifies all $x \in \mathcal{R}_i$ as $+1$ if and only if $P(Y=1 | X \in \mathcal{R}_i) \ge 1/2$.
This is commonly referred to as the \emph{histogram classifier} \citep{gyorfi2006distribution}.
It is easy to see that
\[
  P(\tilde f(X) \ne Y , X \in \mathcal{R}_i)
  =
  \min\left\{
    \int_{\mathcal{R}_i} (1-\eta_P(x)) dP(x),
  \int_{\mathcal{R}_i}\eta_P(x) 
dP(x) \right\}
\]
For the remainder of this proof, we write ``$\sum_i$'' to mean ``$\sum_{i \in [\tilde{k}^d]}$''.
Thus,
\[
  R(\tilde f)
  =
  \sum_{i}
  P(\tilde f(X) \ne Y , X \in \mathcal{R}_i)
  =
  \sum_i \min \left\{ 
  \int_{\mathcal{R}_i} (1-\eta_P(x)) dP(x) ,
\int_{\mathcal{R}_i} \eta_P(x) dP(x)\right\}.
\]

Next, we note that $\tilde f \in \mathtt{HAC}(d,d,k)$. To see this, let $j \in [d]$. Take $H_{j1},\dots, H_{j(\tilde{k}-1)} \subseteq \mathbb{R}^d$ to be the hyperplanes perpendicular to the $j$-th coordinate where, for each $\ell \in [\tilde{k}]$, $H_{j\ell}$ intersects the $j$-th coordinate axis at $\ell/\tilde{k}$.
Consider the hyperplane arrangement consisting of all $\{H_{j\ell}\}_{j \in [d], \ell \in [\tilde{k}-1]}$ and let $\{C_1,C_2,\dots \}$ be its cells.
Then $\{C_1 \cap [0,1]^d, C_2 \cap [0,1]^d,\dots\} = \{\mathcal{R}_1,\dots, \mathcal{R}_{\tilde{k}^d}\}$ is the partition of $[0,1]^d$ by $1/\tilde{k}$ side length hypercubes.
See \cref{fig: regular grid}.


\begin{figure}[htpb]
  \centering
\begin{tikzpicture}[scale=0.9,>=stealth]
        \fill[color=gray!20] (-1.48,-1.48) rectangle (1.5,1.5);
        \fill[pattern=dots,  pattern color=gray] (-0.5,0.5) rectangle (0.5,2);
        \draw[->, thick](-1.5,-2) -- (-1.5,2);
        \draw[->, thick](-2,-1.5) -- (2,-1.5);
\foreach \m  in {0,1}
 \draw[ultra thick, lightgray, xshift=-0.75] (-0.5+\m,-2) -- (-0.5+\m,2) ;
\foreach \m in {0,1}
 \draw[ultra thick,xshift=0.75] (-0.5+\m,-2) -- (-0.5+\m,2) ;
\foreach \m in {0,1}
 \draw[ultra thick, lightgray,yshift=-0.75] (-2,-0.5+\m) -- (2,-0.5+\m) ;
\foreach \m in {0,1}
 \draw[ultra thick,yshift=0.75] (-2,-0.5+\m) -- (2,-0.5+\m) ;
\node at (-0.5+0,-2.3) {$H_{11}$};
\node at (-0.5+1,-2.3) {$H_{12}$};
\node at (-2.4,-0.5+0) {$H_{21}$};
\node at (-2.4,-0.5+1) {$H_{22}$};
\draw [thick,decorate,decoration={brace,amplitude=5pt,mirror,raise=4pt},yshift=0pt]
(2.1,-0.5) -- (2.1,0.5) node [black,midway,xshift=0.8cm] {\footnotesize
$1/3$};
\foreach \m in {0,1}
  \foreach \n in {0,1}
     \node[circle,fill=black,scale=0.6] at (-0.5+\m,-0.5+\n) {};
\end{tikzpicture}
\caption{Partition of $[0,1]^d$ into $1/\tilde{k}$ hypercubes via arrangement of $d(\tilde{k}-1)$ hyperplanes, where $d=2$ and $\tilde{k}=3$.
Shaded region is $[0,1]^d$. Dotted region is a cell of the hyperplane arrangement. }%
  \label{fig: regular grid}
\end{figure}

Let $\bW$ be the matrix of normal vectors and $\biasb$ be the vector of offsets representing this hyperplane arrangement, which requires 
$d(\tilde{k}-1) = d (\lceil k/d \rceil - 1) \le d( k/d) = k$ hyperplanes.
Since $\tilde f$ is constant on $\mathcal{R}_i$,
there exists a Boolean function $h \in \Bool{k}$ such that $h \circ q_{\bW,\biasb}|_{[0,1]^d} = \tilde f$.
From this, we conclude that $\tilde f \in 
\mathtt{HAC}(d,d,k)$.

Thus
$\hat R_n(\hat f_n) - \hat R_n(\tilde f) \le 0$ and so
\begin{align*}
  R(\hat f_n) - R^*
  &=
  R(\hat f_n) - \hat R_n(\hat f_n)
  + \underbrace{\hat R_n(\hat f_n) - \hat R_n(\tilde f)}_{\le 0}
  +\hat R_n(\tilde f) - R(\tilde f)
  + R(\tilde f) - R^*
  \\
  & \le
  \underbrace{R(\hat f_n) - \hat R_n(\hat f_n)}_{\mbox{\tiny{Term 1}}}
  +
  \underbrace{\hat R_n(\tilde f) - R(\tilde f)}_{\mbox{\tiny{Term 2}}}
  + \underbrace{R(\tilde f) - R^*}_{\mbox{\tiny{Term 3}}}.
\end{align*}
We now bound Terms 1 and 2 using the uniform deviation bound.
From \cref{theorem: VC dimension bound for HAC}, we know that there exists a constant $C$ independent of $n$  such that
  \[
    \mathtt{VC}(\mathtt{HAC}(d,d,k))
    \le 8\cdot\left(
      k(d+1) + k(d+1)(1+ \lceil \log_2 (k) \rceil)  + \binom{k}{\le d}
    \right)
    \le C k^d.
  \]
  Thus, by \cref{theorem: uniform deviation bound} with $\delta = 1/(2n)$ and a union bound, 
  with probability at least $1-1/n$ 
\begin{equation}
  \label{equation: uniform deviation bound applied}
 \max
 \left\{ 
   |\hat {R}_{n}(\hat f_n) - R(\hat f_n)| ,\,
   |\hat {R}_{n}(\tilde f) - R(\tilde f)| 
 \right\}
  \le 
  C
  \sqrt{
    \frac{ k^d + \log(n)}{n}
  }
\end{equation}
for some $C >0$.

Next, we focus on Term 3.
Recall that
\begin{equation*}
  R^* =
  \int_{[0,1]^d}
  \min \{ \eta_P(x),  1- \eta_P(x) \}  dP(x)
      =
      \sum_{i}
      \int_{\mathcal{R}_i}
  \min \{ \eta_P(x),  1- \eta_P(x) \}  dP(x)
\end{equation*}
and that
\begin{equation*}
  R(\tilde f) = 
  \sum_{i}
  \min\left\{
    \int_{\mathcal{R}_i} \eta_P(x) dP(x),
    \int_{\mathcal{R}_i} 1 - \eta_P(x) dP(x)
\right\}.
\end{equation*}
Fix some $i \in [k^d]$.
Our goal now is to bound the difference between the $i$-th summands in the above expressions for $R(\tilde f)$ and $R^*$:
\begin{equation}
  \label{equation: minimax proof - histogram classifier - difference}
  \min\left\{
    \int_{\mathcal{R}_i} \eta_P(x) dP(x),
    \int_{\mathcal{R}_i} 1 - \eta_P(x) dP(x)
\right\}
-
      \int_{\mathcal{R}_i}
  \min \{ \eta_P(x),  1- \eta_P(x) \}  dP(x).
\end{equation}
First, consider the case that 
\begin{equation}
  \label{equation: minimax proof - histogram classifier - assumption}
  \min\left\{
    \int_{\mathcal{R}_i} \eta_P(x) dP(x),
    \int_{\mathcal{R}_i} 1 - \eta_P(x) dP(x)
\right\}
=
    \int_{\mathcal{R}_i} \eta_P(x) dP(x).
\end{equation}
We claim that there must exist $x_0 \in \mathcal{R}_i$ such that $\eta_P(x_0) \le 1/2$.
Suppose $\eta_P(x) > 1/2$ for all $x \in \mathcal{R}_i$.
Then $\eta_P(x) > 1/2 > 1 - \eta_P(x)$. Since $\eta_P(x)$ is continuous, this would contradict \cref{equation: minimax proof - histogram classifier - assumption}.

Continue assuming \cref{equation: minimax proof - histogram classifier - assumption}, we further divide into two subcases: (1) $\eta_P(x) \le 1/2$ for all $x \in \mathcal{R}_i$, and (2)
there exists some $x_1 \in \mathcal{R}_i$ such that $\eta_P(x_1) > 1/2$.

Under subcase (1), $\min \{ \eta_P(x),  1- \eta_P(x) \} = \eta_P(x)$ for all $x \in \mathcal{R}_i$ in which case
$
  \cref{equation: minimax proof - histogram classifier - difference}
    =0.
$

Under subcase (2), since $\eta_P(x_0) \le 1/2 < \eta_P(x_1)$, we know by the intermediate value theorem that there must exist $x' \in \mathcal{R}_i$ such that $\eta_P(x') = 1/2$.
Now,
\begin{align*}
  \cref{equation: minimax proof - histogram classifier - difference}
  &=
  \int_{\mathcal{R}_i} (\eta_P(x)
  -
  \min \{ \eta_P(x),  1- \eta_P(x) \})  dP(x)
  \\
  &\le
  \int_{\mathcal{R}_{i}^+} (\eta_P(x)
  -
  \min \{ \eta_P(x),  1- \eta_P(x) \})  dP(x)
  \\
  &
  \qquad 
  +
  \int_{\mathcal{R}_{i}^-} (\eta_P(x)
  -
  \min \{ \eta_P(x),  1- \eta_P(x) \})  dP(x)
  \\
  &
  =
  \int_{\mathcal{R}_{i}^+} (\eta_P(x)
  -
  (1- \eta_P(x) )  dP(x) \qquad \because \mbox{Definition of $\mathcal{R}_i^{\pm}$}
  \\
  &
  \qquad 
  +
  \int_{\mathcal{R}_{i}^-} (\eta_P(x)
  -
  \eta_P(x))  dP(x)
   \\
  &=
  \int_{\mathcal{R}_{i}^+} (2\eta_P(x)-1) dP(x)
  \\
  &=
  2\int_{\mathcal{R}_{i}^+} (\eta_P(x)-\eta_P(x')) dP(x) \qquad \because 2\eta_P(x')=1
  \\
  &\le
  2L\int_{\mathcal{R}_{i}^+} \|x-x'\|_2 dP(x) 
  \\
  &\le
  2L \sqrt{d}  \Pr(\mathcal{R}_i)/\tilde{k} \qquad \because \|x - x'\|_2 \le \sqrt{d} \|x- x'\|_1 \le \sqrt{d}({1}/\tilde{k})
  \\
  &\le
  2L {d}^{3/2}  \Pr(\mathcal{R}_i)/{k} \qquad \because 1/\tilde{k} = 1/\lceil k/d \rceil \le 1/ (k /d) = d/k.
\end{align*}

Thus, under assumption \cref{equation: minimax proof - histogram classifier - assumption}, we have proven that $\cref{equation: minimax proof - histogram classifier - difference} \le 2L {d}^{3/2} /{k}$.
For the other assumption, i.e., the minimum in \cref{equation: minimax proof - histogram classifier - assumption} is attained by
$\int_{\mathcal{R}_i} 1 - \eta_P(x) dP(x)$, a completely analogous argument again shows that $\cref{equation: minimax proof - histogram classifier - difference} \le 2L {d}^{3/2} /{k}$.

Putting it all together, we have
\begin{equation}
  \label{equation: histogram classifier - bound}
  R(\tilde f_n) - R^*
  \le
    2
  L
  {d}^{3/2}
  \sum_{i}
  P(\mathcal{R}_i)
/{k}
  =
    2
  L
  {d}^{3/2}
/{k}.
\end{equation}

We have shown that,
with probability at least $1-1/n$,
\begin{align*}
  R(\hat f_n) - R^*
  \le
  C \sqrt{ \frac{k^d + \log (n)}{n}}
  + \frac{2L {d}^{3/2}}{k}.
\end{align*}
Using 
$
\alpha d n^{1/(d+2)} \le 
k \le \beta dn^{1/{(d+2)}}$, we have with probably at least $1-1/n$ that
\begin{align*}
  R(\hat f_n) - R^*
  &\le
  C \sqrt{ \frac{k^d + \log (n)}{n}}
  + \frac{2L {d}^{3/2}}{k}
  \\
  &\le
  C \sqrt{ \frac{(\beta d)^d n^{d/(d+2)} + \log (n)}{n}}
  + \frac{2L {d}^{3/2}}{\alpha d n^{1/(d+2)}}
  \\
  &\le
  C \left(\sqrt{ \frac{n^{d/(d+2)} }{n}}
  + 
  n^{-1/(d+2)}
  \right)
  \qquad \because
  \log(n) = o(n^{1/{d+2}})
  \\
  &=
  C \left(\sqrt{ {n^{-2/(d+2)} }}
  + 
  n^{-1/(d+2)}
  \right)
\\
  & \le 
  C n^{-\frac{1}{d+2}}.
\end{align*}
Taking expectation, we have 
$\mathbb{E}[ R(\hat f_n)] - R^*
\le 
(1 - 1/n)
  C n^{-\frac{1}{d+2}}
  +
  1/n\cdot 1
  \le 
  C n^{-\frac{1}{d+2}}$.
\hfill $\Box$

\section{Training details}
\textbf{Data preprocessing.} The pooled training and validation data is centered and standardized using the \texttt{StandardScaler} function from sklearn.
The transformation is also applied to the test data, using the centers and scaling from the pooled training and validation data:

\begin{lstlisting}[language=python]
scaler = StandardScaler().fit(X_train_valid)
X_train_valid = scaler.transform(X_train_valid)
X_test = scaler.transform(X_test)
\end{lstlisting}

If the feature dimension and training sample size are both $>50$, then the data is dimension reduced to 50 principal component features:
\begin{lstlisting}[language=python]
if min(X_train_valid.shape) > 50:
    pca = PCA(n_components = 50).fit(X_train_valid)
    X_train_valid = pca.transform(X_train_valid)
    X_test = pca.transform(X_test)
\end{lstlisting}
Note that this is equivalent to freezing the weights between the Input and the Latent layer in \cref{fig: HAC}.

\textbf{Validation and test accuracy.}
Every 10 epochs, the validation accuracy during the past 10 epochs are averaged. A smoothed validation accuracy is calculated as follows:

\begin{lstlisting}[language=python]
val_acc_sm = (1-sm_param)*val_acc_sm + sm_param*val_acc_av
## Variable description:
# sm_param = 0.1
# val_acc_av = average of the validation in the past 10 epochs
# val_acc_sm = smoothed validation accuracy
\end{lstlisting}
The predicted test labels is based on the snapshot of the model at the highest smoothed validation accuracy, at the end once max epochs is reached.

\textbf{Heuristic for coarse gradient of the threshold function.} We use the SwishSign from the Larq library \citep{larq}.
\begin{lstlisting}[language=python]
# import larq as lq
qtz = lq.quantizers.SwishSign()
\end{lstlisting}

\textbf{Dropout.} During training, dropout is applied to the Boolean output of the threshold function, i.e, the variables $\mathtt{B}_1,\mathtt{B}_2,\dots,\mathtt{B}_k$ in \cref{fig: HAC}.
This improves generalization by preventing the training accuracy from reaching $100\%$.

\begin{lstlisting}[language=python]
# from tensorflow.keras.layers import Dense, Dropout
hyperplane_enc = Dense(n_hyperplanes, activation = qtz)(inputs)
hyperplane_enc = Dropout(dropout_rate)(hyperplane_enc)
\end{lstlisting}

\textbf{Implementation of the Boolean function.}
For the Boolean function $h$, we use a 1-hidden layer residual network \citep{he2016deep} 
with $1000$ hidden nodes:
\begin{lstlisting}[language=python]
# from tensorflow.keras.layers import Dense, Add
# output_dim = num_classes
n_hidden = 1000
hidden = Dense(n_hidden, activation="relu")(hyperplane_enc)
out_hidden = Dense(output_dim, activation = "linear")(hidden)
out_skip = Dense(output_dim, activation = "linear")(hyperplane_enc)
outputs = Add()([out_skip,out_hidden])
\end{lstlisting}

\textbf{Hyperparameters.}
\texttt{HANN15} is trained with a hyperparameter grid of size 3 where
only the dropout rate is tuned.
The hyperparameters are summarized in Table~\ref{table: HANN15 grid}.
The model with the highest smoothed validation accuracy is chosen.

The model \texttt{HANN15} is trained with the following hyperparameters:
\begin{table}[H]
  \caption{\texttt{HANN15} model and training hyperparameter grid}
\label{table: HANN100 grid}
\begin{center}
\begin{small}
\begin{sc}
\begin{tabular}{lc}
\toprule
Optimizer      & SGD     \\
Learning rate & $0.01$\\
Dropout rate & $\{0.1,0.25,0.5\}$\\
Minibatch size & 128\\
Boolean function & 
\blap{
1-hidden layer resnet\\
with 1000 hidden nodes}
\\[15pt]
Epochs & \blap{500 \textsc{miniboone}\\ 5000 for all others}\\[10pt]
\bottomrule
\end{tabular}
\end{sc}
\end{small}
\end{center}
\vskip -0.1in
\end{table}

For \texttt{HANN100}, we only used 1 set of hyperparameters.

\begin{table}[H]
  \caption{\texttt{HANN100} model and training hyperparameter}
\label{table: HANN15 grid}
\begin{center}
\begin{small}
\begin{sc}
\begin{tabular}{lc}
\toprule
Optimizer      & SGD     \\
Learning rate & $0.01$\\
Dropout rate & $0.5$\\
Minibatch size & 128\\
Boolean function & 
\blap{
1-hidden layer resnet\\
with 1000 hidden nodes}
\\[15pt]
Epochs & \blap{500 \textsc{miniboone}\\ 5000 for all others}\\[10pt]
\bottomrule
\end{tabular}
\end{sc}
\end{small}
\end{center}
\vskip -0.1in
\end{table}

\newpage

\section{Additional plots}

\textbf{Multiclass hinge versus cross-entropy loss.}
\cref{figure: empirical results - OCE} shows the accuracy differences when the Weston-Watkins hinge loss is used.
Compared to the results shown in \cref{figure: empirical results}, the performance for \texttt{HANN100} is slightly worse and the performance for \texttt{HANN15} is slightly better.

\begin{figure}[H]
\begin{center}
  \includegraphics[width = 1 \textwidth]{}
\end{center}
\caption{Each blue tick above the x-axis represents a single dataset, where the x-coordinate of the tick is the difference of the accuracy of HANN and either SNN (left) or DENN (right) on the dataset.
The number of hyperplanes used by HANN is either 15 (top) or 100 (bottom).
The quantities shown in the top-left corner of each subplot are the median, 20-th and 80-th quantiles of the differences, respectively, rounded to 1 decimal place.}
\label{figure: empirical results - OCE}
\end{figure}

\newpage

\textbf{Implicit bias for low complexity decision boundary.}
In \cref{fig: more moons}, we show additional results ran with the same setting for the \textsc{moons} synthetic dataset as in the Empirical Results section.
From the perspective of the training loss, the label assignment in the bold-boundary regions is irrelevant.
Nevertheless, the optimization consistently appears to be biased toward the geometrically simpler classifier, despite the capacity for fitting complex classifiers.

\begin{figure}[htpb]
  \centering
  \includegraphics[width=0.48\linewidth]{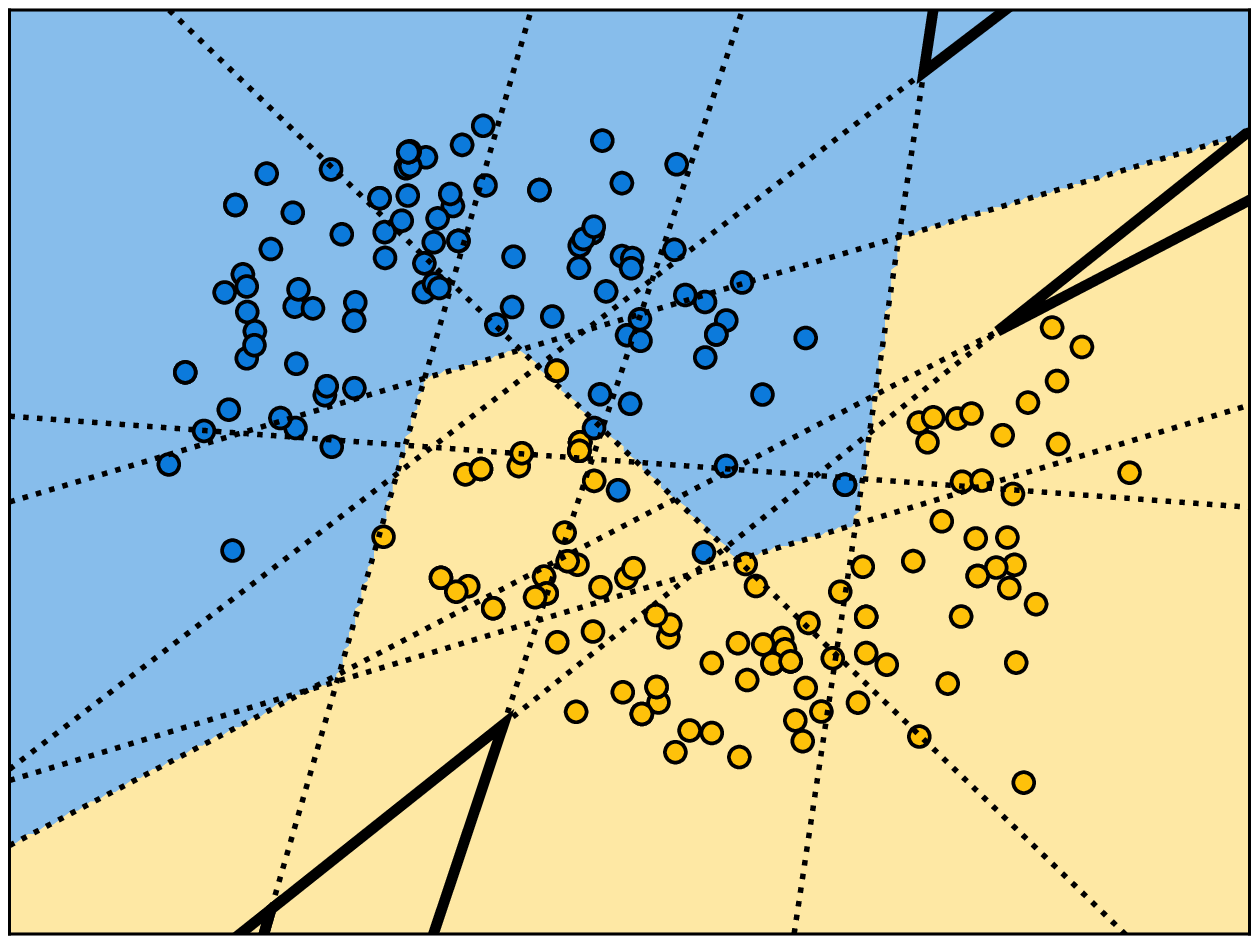}
  ~
  \includegraphics[width=0.48\linewidth]{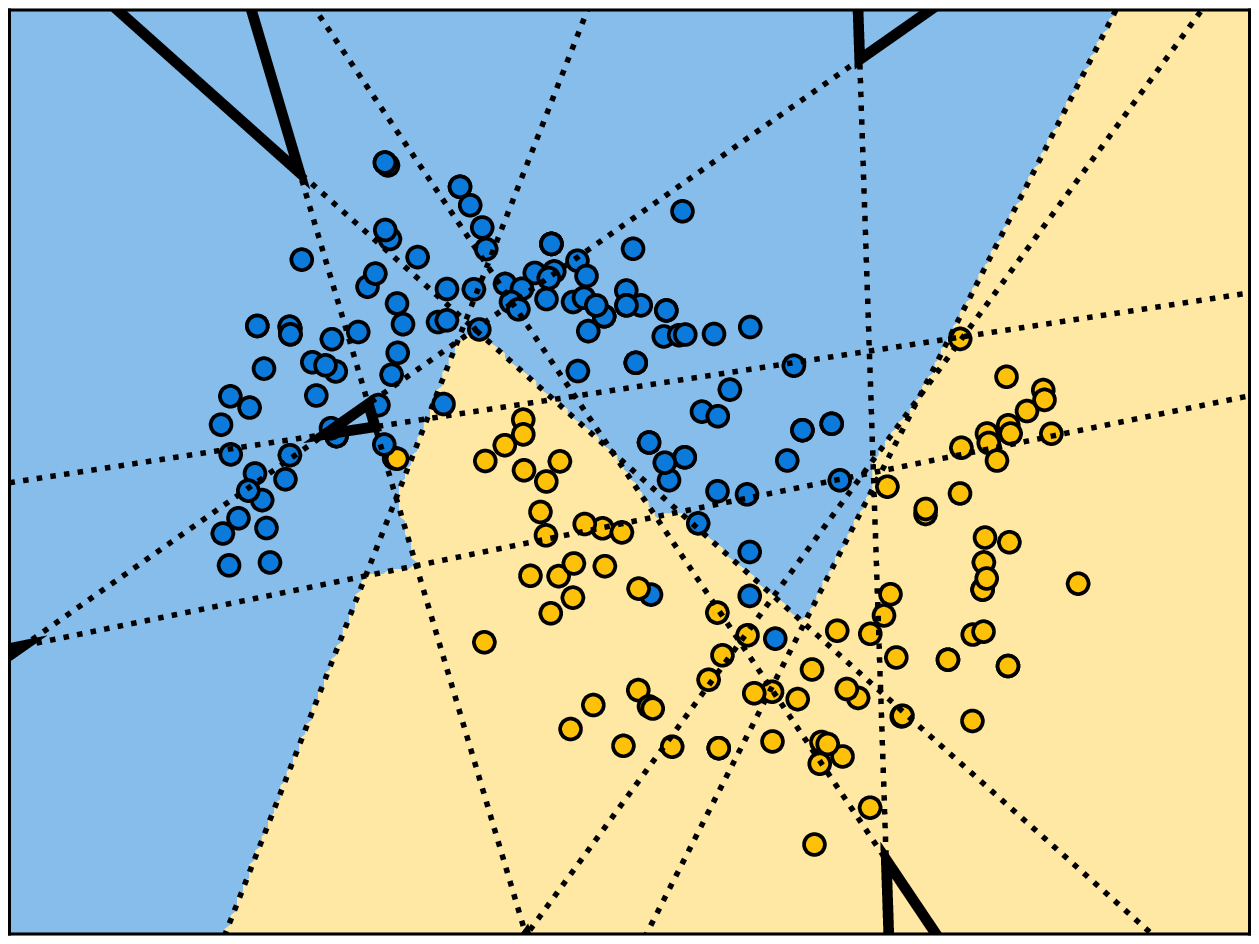}
  \includegraphics[width=0.48\linewidth]{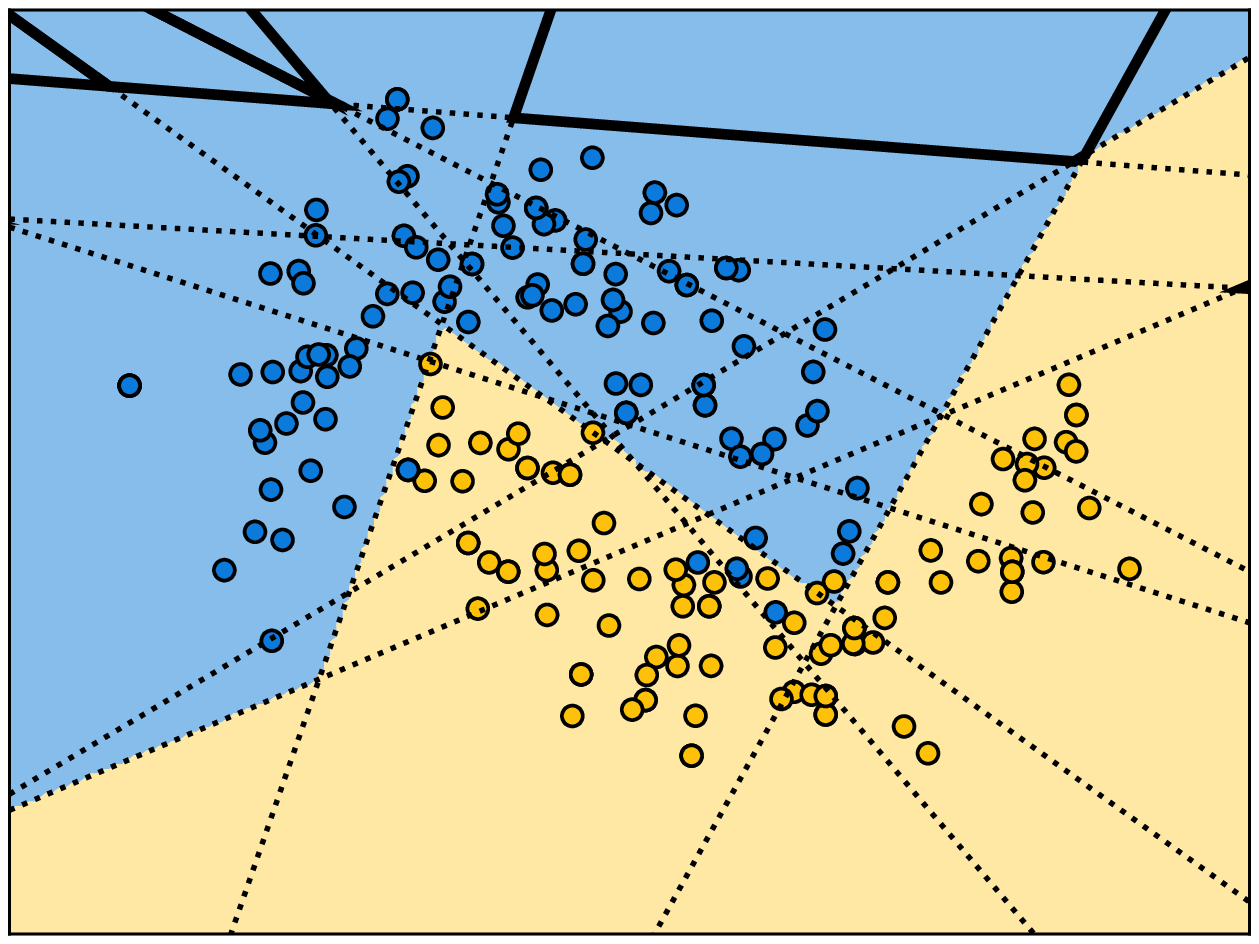}
  ~
  \includegraphics[width=0.48\linewidth]{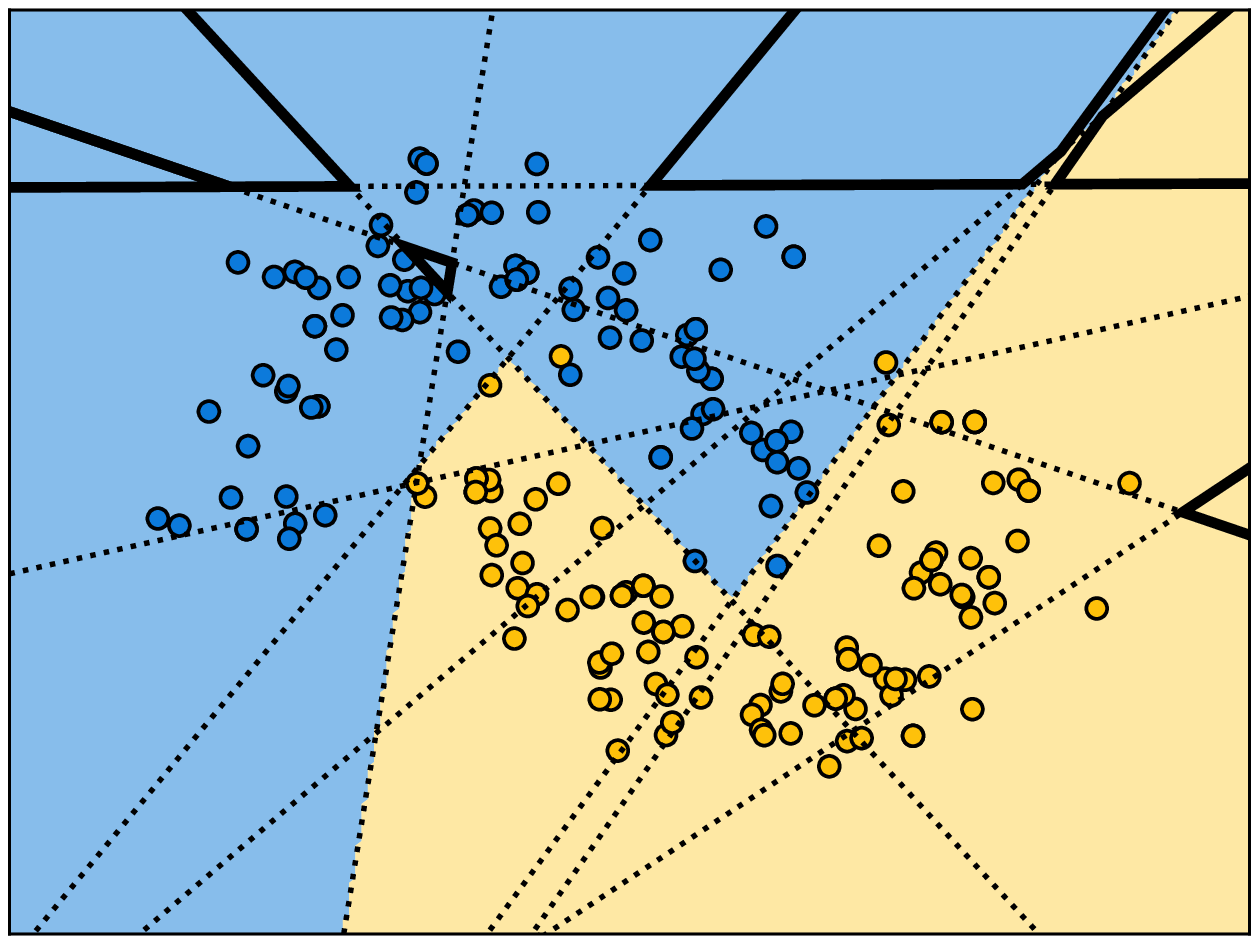}
  \caption{Four independent runs of HANN on the \textsc{moons} synthetic dataset.    
    Data points (circles) drawn from \texttt{make\_moons} in \texttt{sklearn} colored by ground truth labels.
    The hyperplane arrangement is denoted by dotted lines.
  Coloring of the cells corresponds to the decision region of the trained classifier.
  A cell $\mathcal{C}$ is highlighted by bold boundaries if 1) no training data lies in $\mathcal{C}$ and 2) $\mathcal{C}$ does not touch the decision boundary.
}%
  \label{fig: more moons}
\end{figure}

\section{Table of accuracies}
Below is the table of accuracies used to make \cref{figure: empirical results}.
Available as a csv file here: \url{https://github.com/YutongWangUMich/HANN/blob/main/accuracy_table.csv}
\begin{longtable}{lrrrr}
\toprule
                        DSName &  HANN15 &  HANN100 &    SNN &   DENN \\
\midrule
\endfirsthead

\toprule
                        DSName &  HANN15 &  HANN100 &    SNN &   DENN \\
\midrule
\endhead
\midrule
\multicolumn{5}{r}{{Continued on next page}} \\
\midrule
\endfoot

\bottomrule
\endlastfoot
                       abalone &   63.41 &    65.13 &  66.57 &  66.38 \\
            acute-inflammation &  100.00 &   100.00 & 100.00 & 100.00 \\
               acute-nephritis &  100.00 &   100.00 & 100.00 & 100.00 \\
                         adult &   84.32 &    85.04 &  84.76 &  84.80 \\
                     annealing &   47.00 &    74.00 &  76.00 &  75.00 \\
                    arrhythmia &   62.83 &    64.60 &  65.49 &  67.26 \\
                 audiology-std &   56.00 &    68.00 &  80.00 &  76.00 \\
                 balance-scale &   92.95 &    96.79 &  92.31 &  98.08 \\
                      balloons &  100.00 &   100.00 & 100.00 & 100.00 \\
                          bank &   88.50 &    88.05 &  89.03 &  89.65 \\
                         blood &   75.94 &    75.40 &  77.01 &  73.26 \\
                 breast-cancer &   70.42 &    63.38 &  71.83 &  69.01 \\
            breast-cancer-wisc &   97.71 &    98.29 &  97.14 &  97.71 \\
       breast-cancer-wisc-diag &   97.89 &    98.59 &  97.89 &  98.59 \\
       breast-cancer-wisc-prog &   73.47 &    71.43 &  67.35 &  71.43 \\
                 breast-tissue &   61.54 &    80.77 &  73.08 &  65.38 \\
                           car &   98.84 &   100.00 &  98.38 &  98.84 \\
     cardiotocography-10clases &   78.91 &    82.11 &  83.99 &  82.30 \\
      cardiotocography-3clases &   90.58 &    93.97 &  91.53 &  94.35 \\
                    chess-krvk &   47.75 &    72.77 &  88.05 &  80.41 \\
                   chess-krvkp &   98.62 &    99.37 &  99.12 &  99.62 \\
          congressional-voting &   61.47 &    57.80 &  61.47 &  57.80 \\
  conn-bench-sonar-mines-rocks &   78.85 &    84.62 &  78.85 &  82.69 \\
    conn-bench-vowel-deterding &   89.39 &    98.92 &  99.57 &  99.35 \\
                     connect-4 &   78.96 &    86.39 &  88.07 &  86.46 \\
                       contrac &   52.72 &    49.73 &  51.90 &  54.89 \\
               credit-approval &   81.98 &    79.65 &  84.30 &  82.56 \\
                cylinder-bands &   69.53 &    73.44 &  72.66 &  78.12 \\
                   dermatology &   98.90 &    97.80 &  92.31 &  97.80 \\
                echocardiogram &   84.85 &    87.88 &  81.82 &  87.88 \\
                         ecoli &   86.90 &    84.52 &  89.29 &  85.71 \\
                     energy-y1 &   93.23 &    97.40 &  95.83 &  95.83 \\
                     energy-y2 &   89.06 &    91.15 &  90.63 &  90.62 \\
                     fertility &   92.00 &    92.00 &  92.00 &  88.00 \\
                         flags &   39.58 &    50.00 &  45.83 &  52.08 \\
                         glass &   77.36 &    60.38 &  73.58 &  60.38 \\
             haberman-survival &   72.37 &    65.79 &  73.68 &  65.79 \\
                    hayes-roth &   71.43 &    82.14 &  67.86 &  85.71 \\
               heart-cleveland &   53.95 &    59.21 &  61.84 &  57.89 \\
               heart-hungarian &   72.60 &    79.45 &  79.45 &  78.08 \\
             heart-switzerland &   45.16 &    51.61 &  35.48 &  48.39 \\
                      heart-va &   36.00 &    30.00 &  36.00 &  32.00 \\
                     hepatitis &   82.05 &    82.05 &  76.92 &  79.49 \\
                   hill-valley &   66.83 &    68.81 &  52.48 &  54.62 \\
                   horse-colic &   80.88 &    83.82 &  80.88 &  82.35 \\
             ilpd-indian-liver &   70.55 &    69.18 &  69.86 &  71.92 \\
            image-segmentation &   87.76 &    90.57 &  91.14 &  90.57 \\
                    ionosphere &   89.77 &    87.50 &  88.64 &  96.59 \\
                          iris &  100.00 &    97.30 &  97.30 & 100.00 \\
                   led-display &   73.60 &    75.20 &  76.40 &  76.00 \\
                        lenses &   50.00 &    66.67 &  66.67 &  66.67 \\
                        letter &   81.82 &    96.86 &  97.26 &  96.20 \\
                        libras &   64.44 &    81.11 &  78.89 &  77.78 \\
                 low-res-spect &   86.47 &    90.23 &  85.71 &  90.23 \\
                   lung-cancer &   37.50 &    62.50 &  62.50 &  62.50 \\
                  lymphography &   89.19 &    94.59 &  91.89 &  94.59 \\
                         magic &   86.52 &    87.49 &  86.92 &  86.81 \\
                  mammographic &   81.25 &    80.00 &  82.50 &  80.83 \\
           molec-biol-promoter &   73.08 &    80.77 &  84.62 &  88.46 \\
             molec-biol-splice &   79.05 &    78.04 &  90.09 &  85.45 \\
                       monks-1 &   65.97 &    69.91 &  75.23 &  81.71 \\
                       monks-2 &   66.20 &    66.44 &  59.26 &  65.05 \\
                       monks-3 &   54.63 &    61.81 &  60.42 &  80.09 \\
                      mushroom &  100.00 &   100.00 & 100.00 & 100.00 \\
                        musk-1 &   77.31 &    84.87 &  87.39 &  89.92 \\
                        musk-2 &   97.21 &    98.61 &  98.91 &  99.27 \\
                       nursery &   99.75 &    99.91 &  99.78 & 100.00 \\
 oocytes-merluccius-nucleus-4d &   86.27 &    83.14 &  82.35 &  83.92 \\
  oocytes-merluccius-states-2f &   92.16 &    92.55 &  95.29 &  92.94 \\
oocytes-trisopterus-nucleus-2f &   81.14 &    82.02 &  79.82 &  82.46 \\
 oocytes-trisopterus-states-5b &   93.86 &    96.05 &  93.42 &  94.74 \\
                       optical &   93.10 &    95.94 &  97.11 &  96.38 \\
                         ozone &   96.53 &    95.58 &  97.00 &  97.48 \\
                   page-blocks &   96.49 &    96.13 &  95.83 &  96.13 \\
                    parkinsons &   87.76 &    89.80 &  89.80 &  85.71 \\
                     pendigits &   94.40 &    97.11 &  97.06 &  97.37 \\
                          pima &   71.88 &    73.44 &  75.52 &  69.79 \\
    pittsburg-bridges-MATERIAL &   88.46 &    92.31 &  88.46 &  92.31 \\
       pittsburg-bridges-REL-L &   76.92 &    73.08 &  69.23 &  73.08 \\
        pittsburg-bridges-SPAN &   60.87 &    69.57 &  69.57 &  73.91 \\
      pittsburg-bridges-T-OR-D &   84.00 &    84.00 &  84.00 &  84.00 \\
        pittsburg-bridges-TYPE &   65.38 &    65.38 &  65.38 &  57.69 \\
                      planning &   66.67 &    55.56 &  68.89 &  60.00 \\
                  plant-margin &   50.50 &    79.50 &  81.25 &  83.25 \\
                   plant-shape &   39.00 &    66.50 &  72.75 &  72.50 \\
                 plant-texture &   51.75 &    75.25 &  81.25 &  81.00 \\
                post-operative &   40.91 &    63.64 &  72.73 &  68.18 \\
                 primary-tumor &   54.88 &    47.56 &  52.44 &  53.66 \\
                      ringnorm &   90.43 &    85.35 &  97.51 &  97.57 \\
                         seeds &   92.31 &    96.15 &  88.46 &  92.31 \\
                       semeion &   74.37 &    92.71 &  91.96 &  96.73 \\
                       soybean &   77.93 &    88.83 &  85.11 &  88.03 \\
                      spambase &   93.57 &    94.17 &  94.09 &  94.87 \\
                         spect &   62.90 &    63.44 &  63.98 &  62.37 \\
                        spectf &   91.98 &    91.98 &  49.73 &  89.30 \\
     statlog-australian-credit &   65.12 &    63.37 &  59.88 &  61.05 \\
         statlog-german-credit &   72.40 &    72.40 &  75.60 &  72.00 \\
                 statlog-heart &   85.07 &    91.04 &  92.54 &  92.54 \\
                 statlog-image &   95.15 &    96.88 &  95.49 &  97.75 \\
               statlog-landsat &   87.55 &    89.25 &  91.00 &  89.90 \\
               statlog-shuttle &   99.92 &    99.92 &  99.90 &  99.91 \\
               statlog-vehicle &   78.67 &    77.25 &  80.09 &  81.04 \\
                  steel-plates &   73.61 &    76.49 &  78.35 &  77.53 \\
             synthetic-control &   94.00 &    98.00 &  98.67 &  99.33 \\
                      teaching &   57.89 &    57.89 &  50.00 &  57.89 \\
                       thyroid &   98.37 &    98.25 &  98.16 &  98.22 \\
                   tic-tac-toe &   96.65 &    97.07 &  96.65 &  98.33 \\
                       titanic &   78.73 &    78.73 &  78.36 &  78.73 \\
                        trains &  100.00 &    50.00 &    NaN &    NaN \\
                       twonorm &   97.30 &    98.27 &  98.05 &  98.16 \\
      vertebral-column-2clases &   88.31 &    85.71 &  83.12 &  85.71 \\
      vertebral-column-3clases &   81.82 &    80.52 &  83.12 &  80.52 \\
                wall-following &   92.45 &    94.79 &  90.98 &  91.86 \\
                      waveform &   85.84 &    84.00 &  84.80 &  83.92 \\
                waveform-noise &   84.72 &    84.96 &  86.08 &  84.32 \\
                          wine &   97.73 &   100.00 &  97.73 & 100.00 \\
              wine-quality-red &   62.50 &    65.00 &  63.00 &  63.50 \\
            wine-quality-white &   54.82 &    61.03 &  63.73 &  62.25 \\
                         yeast &   59.03 &    60.65 &  63.07 &  58.22 \\
                           zoo &   96.00 &    96.00 &  92.00 & 100.00 \\
\end{longtable}

\end{document}